\newcommand{\R}{\mathbb{R}}
\newcommand{\E}{\mathbb{E}}
\DeclarePairedDelimiter{\brk}{[}{]}
\DeclarePairedDelimiter{\crl}{\{}{\}}
\DeclarePairedDelimiter{\prn}{(}{)}
\DeclarePairedDelimiter{\nrm}{\|}{\|}
\DeclarePairedDelimiter{\ceil}{\lceil}{\rceil}
\let\P\undefined
\DeclareMathOperator{\P}{\mathbb{P}}
\DeclareMathOperator*{\argmin}{arg\,min}
\newcommand{\mc}[1]{\mathcal{#1}}
\def\ddefloop#1{\ifx\ddefloop#1\else\ddef{#1}\expandafter\ddefloop\fi}
\def\ddef#1{\expandafter\def\csname 
bb#1\endcsname{\ensuremath{\mathbb{#1}}}}
\def\ddefloop#1{\ifx\ddefloop#1\else\ddef{#1}\expandafter\ddefloop\fi}
\def\ddef#1{\expandafter\def\csname 
b#1\endcsname{\ensuremath{\mathbf{#1}}}}
\def\ddef#1{\expandafter\def\csname 
c#1\endcsname{\ensuremath{\mathcal{#1}}}}
\def\ddef#1{\expandafter\def\csname 
h#1\endcsname{\ensuremath{\widehat{#1}}}}
\def\ddef#1{\expandafter\def\csname 
hc#1\endcsname{\ensuremath{\widehat{\mathcal{#1}}}}}
\def\ddef#1{\expandafter\def\csname 
t#1\endcsname{\ensuremath{\widetilde{#1}}}}
\def\ddef#1{\expandafter\def\csname 
tc#1\endcsname{\ensuremath{\widetilde{\mathcal{#1}}}}}
\newcommand{\indicator}[1]{\mathbbm{1}_{#1}}
\newcommand{\inner}[2]{\left\langle #1,\, #2 \right\rangle}
\newcommand{\remove}[1]{}
\newcommand{\tm}[0]{\mathsf{Time}}
\newcommand{\mathth}[0]{^{\textrm{th}}}
\newcommand{\myreduction}[0]{\mathsf{GC2Cvx}}
\newtheorem{lemma}{Lemma}
\newtheorem{corollary}{Corollary}
\newtheorem{theorem}{Theorem}
\newtheorem{definition}{Definition}
\newtheorem{assumption}{Assumption}
\newcommand{\wag}[0]{w^{\textrm{ag}}}
\newcommand{\wmd}[0]{w^{\textrm{md}}}
\title{\vspace{-2em}\rule{\linewidth}{2pt}\\An Even More Optimal Stochastic Optimization Algorithm: Minibatching and Interpolation Learning}
\title{\vspace{-2em}\rule{\linewidth}{2pt}\\ \textbf{An Even More Optimal Stochastic Optimization Algorithm: Minibatching and Interpolation Learning} \\\rule[8pt]{\linewidth}{1pt}\vspace{-0.7em}}
\author{\normalsize\centering
\begin{minipage}{0.35\textwidth}
\centering
\textbf{Blake Woodworth}\\ 
Toyota Technological\\
Institute at Chicago \\
\small\url{blake@ttic.edu} \\ $ $
\end{minipage}
\begin{minipage}{0.35\textwidth}
\centering
\textbf{Nathan Srebro} \\
Toyota Technological \\
Institute at Chicago \\
\small\url{nati@ttic.edu} \\ $ $
\end{minipage}
\vspace{-4mm}
}
\date{}
\begin{document}

\maketitle

\begin{abstract}
We present and analyze an algorithm for optimizing smooth and convex or strongly convex objectives using minibatch stochastic gradient estimates. The algorithm is optimal with respect to its dependence on both the minibatch size and minimum expected loss simultaneously. This improves over the optimal method of  \citet{lan2012optimal}, which is insensitive to the minimum expected loss; over the optimistic acceleration of \citet{cotter2011better}, which has suboptimal dependence on the minibatch size; and over the algorithm of \citet{liu2018mass}, which is limited to least squares problems and is also similarly suboptimal with respect to the minibatch size.  Applied to interpolation learning, the improvement over \citeauthor{cotter2011better} and \citeauthor{liu2018mass} translates to a linear, rather than square-root, parallelization speedup.  
\end{abstract}

\section{Introduction}

The massive scale of many modern machine learning models and datasets give rise to complex, high-dimensional training objectives that can be very computationally expensive to optimize. To reduce the computational cost, it is therefore important to devise optimization algorithms that can leverage parallelism to reduce the amount of time needed to train. Stochastic first-order methods, which use stochastic estimates of the gradient of the training objective, are by far the most common approach and these methods can be directly improved by using minibatch stochastic gradient estimates, which are easy to parallelize across multiple computing cores or devices. Accordingly, we propose and analyze an optimal accelerated minibatch stochastic gradient descent algorithm. 

Our analysis exploits that while training machine learning models, it is typically possible to drive the loss either all the way to zero, or at least very close to zero, and the performance of our algorithm improves as the minimum value of the loss approaches zero. This property has multiple names in different contexts. In learning theory, it is common to show fast rates under the assumption of ``realizability''---meaning the data can be fit perfectly by the model, i.e.~the loss can be driven to zero. Even when the problem is not exactly realizable, it is sometimes possible to derive ``optimistic rates'' that interpolate between fast rates for realizable learning and the slower agnostic rates \citep{srebro2010optimistic}. In the context of ``interpolation learning,'' there has recently been great interest in understanding training ``overparametrized'' models---which have many more parameters than there are training examples, generally meaning that many settings of the parameters would attain zero training loss---both in terms of optimization \citep{jacot2018neural,allen2018learning,arora2019fine,chizat2019lazy,allen2019convergence} and generalization \citep{zhang2016understanding,tsigler2020benign,belkin2019reconciling}. Finally, in the optimization literature, there have been efforts to prove optimistic rates depending on the minimum value of the objective or on the variance of the stochastic gradients a the minimizer \citep{schmidt2013fast,needell2014stochastic,moulines2011non,cotter2011better,liu2018mass,ma2018power}. Regardless of the name, these ideas are all based on the same fundamental concept of exploiting the fact that the minimum value of the objective is nearly zero.


\subsubsection*{Our contributions}
In Section \ref{sec:alg-convex}, we present and analyze an accelerated minibatch SGD algorithm for optimizing smooth and convex objectives using minibatch stochastic gradient estimates. Our method closely resembles the methods of \citet{lan2012optimal} and \citet{cotter2011better}, but with different stepsizes and momentum parameters, and a tighter analysis. Importantly, our algorithm enjoys a linear speedup in the minibatch size all the way up to a critical threshold beyond which larger minibatches do not help. In contrast, \citeauthor{lan2012optimal} and \citeauthor{cotter2011better}'s bounds have a worse, sublinear speedup and a correspondingly higher critical threshold.

In Section \ref{sec:reduction-and-sc}, we show that a modified version of our algorithm can attain substantially faster convergence when the objective satisfies a certain quadratic growth condition, which is a relaxation of strong convexity. As part of our analysis, we simplify and generalize a restarting technique \citep{nemirovskii1985optimal,ghadimi2013optimal,nesterov2013gradient,lin2014adaptive,roulet2020sharpness,renegar2021simple}, which we show amounts to a reduction \emph{from} strongly convex optimization \emph{to} convex optimization. The reduction in the other direction is a well-known tool in optimization analysis, but our result shows that it goes both ways and it may be of more general interest.

In Sections \ref{sec:optimality} and \ref{sec:mb-speedup}, we prove that our methods are optimal with respect to both the minibatch size and also the minimum value of the loss in the settings we consider. We then explain how our guarantees demonstrate a linear speedup in the minibatch size, which improves over a sublinear speedup in previous work \citep{cotter2011better,liu2018mass}.

Finally, in Section \ref{sec:sigma-star}, we extend our results to a related setting where the bound on the minimum value of the loss is replaced by a bound on the variance of the stochastic gradients at the optimum \citep[as in, e.g.][]{moulines2011non,schmidt2013fast,needell2014stochastic,bottou2018optimization,gower2019sgd}. Under this condition, we establish the optimal error achievable by any learning rule, including non-first-order methods, and we show that the optimal convergence rate is nevertheless achieved by SGD---a first-order method without acceleration. Further, we show that the accelerated optimization rate, $T^{-2}$, is unattainable using minibatches of size $1$, but with larger minibatches, our accelerated minibatch SGD method can match the optimal error of SGD using a substantially smaller parallel runtime but the same number of samples.




\section{Setting and Background}

We consider a generic stochastic optimization problem
\begin{equation}\label{eq:the-problem}
\min_{w\in\R^d} \crl*{L(w) := \E_{z\sim\mc{D}}[\ell(w;z)]}
\end{equation}
This problem captures, for instance, supervised machine learning where $z = (x,y)$ is feature vector and label pair; $\ell(w;(x,y))$ is the loss of a model parametrized by $w$ on that example; and $L(w)$ is the expected loss. We note that there are two possible interpretations of $L$ depending on what $\mc{D}$ corresponds to. We can take $\mc{D}$ to be the uniform distribution over a set of training examples, in which case $L$ is the training loss. Alternatively, we can take $\mc{D}$ to be the population distribution, in which case $L$ is the population risk. An advantage of the latter view is that optimization guarantees directly imply good performance on the population, however, computing an independent, unbiased stochastic gradient estimate requires a fresh sample from the distribution, so only ``one-pass'' methods are possible. Nevertheless, our algorithm and analysis apply equally well in either viewpoint.

We consider optimizing objectives on $\R^d$, but we are most interested in dimension-free analysis, i.e.~one that does not explicitly depend on the dimension, and our algorithm's guarantees hold even in infinite dimension. Indeed, in many applications including machine learning, the dimension can be very large, so an explicit reliance on the dimension would often lead to impractical bounds.

We study the family of optimization algorithms that attempt to minimize \eqref{eq:the-problem} using $T$ sequential stochastic gradient estimates with minibatches of size $b$ of the form
\begin{equation}
g(w_t) = \frac{1}{b}\sum_{i=1}^b \nabla \ell(w_t;z^i) \quad\textrm{for i.i.d.}\ z^1,\dots,z^b \sim \mc{D}
\end{equation}
at points $w_1,\dots,w_T$ of the algorithm's choice. Later, we will argue that our proposed method is optimal with respect to all of the algorithms in this family. Because it is easy to parallelize the computation of the $b$ stochastic gradients in the minibatch, $T$ roughly captures the runtime of the algorithm, while $n = bT$, the total number of samples used, captures its sample complexity.

In our analysis, we will rely on several assumptions about the losses $\ell$ and the expected loss $L$.
\begin{assumption}\label{assumption:ell}
For almost every $z\sim\mc{D}$, $\ell(w;z)$ is non-negative, convex, and $H$-smooth w.r.t.~$w$, i.e.
\[
\forall_{w,u,z}\ 
\ell(u;z) + \inner{\nabla \ell(u;z)}{w-u}
\leq \ell(w;z) 
\leq \ell(u;z) + \inner{\nabla \ell(u;z)}{w-u} + \frac{H}{2}\nrm{w-u}^2
\]
\end{assumption}
This assumption holds, for instance, for training linear models with a smooth, convex, and non-negative loss function, as in least squares problems or logistic regression.
\begin{assumption}\label{assumption:convex}
The expected loss $L$ is convex and has minimum value $L^* = \min_w L(w)$, which is attained at a point $w^*$ with $\nrm{w^*}\leq B$.
\end{assumption}
The minimum value of the loss, $L^*$, is the key quantity in our analysis, and we will prove an ``optimistic rate'' for our algorithm, meaning it performs increasingly well as $L^* \to 0$. The key idea is to use $L^*$ to bound the variance of the stochastic gradient estimates at the point $w^*$. Following prior work \citep{shalev2007online,srebro2010optimistic,cotter2011better}, we observe that under Assumptions \ref{assumption:ell} and \ref{assumption:convex}, we can upper bound (see Lemma \ref{lem:lstar-bound})
\begin{equation}
\E\nrm*{\nabla \ell(w^*;z) - \nabla L(w^*)}^2 \leq 2HL^*
\end{equation}
The $H$-Lipschitzness of $\nabla \ell$ allows us to upper bound the variance of the stochastic gradients at other, non-minimizing points $w$ too. 
In contrast, a common practice in the optimization literature is to simply assert that the variance of the stochastic gradient estimates is uniformly upper bounded, i.e.
\begin{equation}\label{eq:uniform-variance-bound}
\sup_w \E \nrm*{\ell(w;z) - \nabla L(w)}^2 \leq \sigma^2
\end{equation}
This bound is very convenient, but it can fail to hold even in very simple cases like least squares regression, where the variance grows with $\nrm{w}^2$. Therefore, while faster rates may be achieved under the condition \eqref{eq:uniform-variance-bound}, this is quite strong and may not correspond well with problems of interest.

For many optimization problems, particularly those arising from training machine learning models, the minimum of the loss, $L^*$, can be expected to be small. For example, machine learning models are often trained in the interpolation regime \citep{jacot2018neural,allen2018learning,arora2019fine,chizat2019lazy,allen2019convergence,zhang2016understanding,tsigler2020benign,belkin2019reconciling}, where the training loss is under-determined, so there are many settings of the parameters which achieve exactly zero training loss. When $\mc{D}$ is the empirical distribution (see above), the interpolation regime therefore corresponds to $L^* = 0$. However, even if $\mc{D}$ is the population distribution, machine learning problems are often realizable, or nearly realizable, meaning the population risk can also be driven to zero or close to it.

In the optimization literature, it is often possible to show enhanced guarantees under the favorable condition of $\lambda$-strong convexity. However, the assumption of strong convexity is somewhat at odds with our goal of studying the interpolation or near-interpolation setting. For example, the simplest example of interpolation learning is underdetermined least squares, where the number of observations is less than the dimension; in this case, the empirical covariance is degenerate so the problem is not strongly convex. Furthermore, assuming strong convexity \emph{and} $L^* \approx 0$ \emph{and} $\ell$ is non-negative is very strong, and puts very strong constraints on the objective function which may often fail to hold.

We consider instead a relaxation of strong convexity, which only requires the objective to grow faster than the squared distance to the set of minimizers, which need not be a singleton \citep{bolte2017error,drusvyatskiy2018error,necoara2019linear}. This condition \emph{does} hold for underdetermined least squares---with $\lambda$ equal to the smallest \emph{non-zero} eigenvalue of the covariance---and for many other problems with non-unique minimizers, and it turns out to be sufficient to achieve the faster rates that typically arise from strong convexity.

\begin{assumption}\label{assumption:sc}
The expected loss $L$ is convex; it has minimum value $L^* = \min_w L(w)$; $L(0) - L^* \leq \Delta$; and $L$ satisfies the following growth condition for all $w$:
\[
L(w) - L^* \geq \frac{\lambda}{2}\min_{w^* \in \argmin_w L(w)} \nrm{w - w^*}^2
\]
\end{assumption}

\subsection*{Related Work}

The foundation of much of the work on optimal stochastic optimization algorithms is the accelerated SGD variant, AC-SA, of \citet{lan2012optimal}. This algorithm is analyzed in a different setting, where $L$ is $H$-smooth, $B$-bounded, and convex and where the stochastic gradients have uniformly bounded variance \eqref{eq:uniform-variance-bound}, and under these conditions, it is optimal, with guarantee
\begin{equation}
\E L(\hat{w}) - L^* \leq c\cdot\prn*{\frac{HB^2}{T^2} + \frac{\sigma B}{\sqrt{bT}}}
\end{equation}
In our setting, however, there is no such explicit bound on the gradient variance, and at the same time, \citeauthor{lan2012optimal}'s analysis does not exploit the bound on $L^*$ to achieve a better rate. Under our Assumptions \ref{assumption:ell} and \ref{assumption:convex}, it is possible to derive a variance upper bound $\sigma^2 = 2H^2B^2 + 4HL^*$, which yields
\begin{equation}
\E L(\hat{w}) - L^* \leq c\cdot\prn*{\frac{HB^2}{T^2} + \frac{HB^2}{\sqrt{bT}} + \sqrt{\frac{HB^2L^*}{bT}}}
\end{equation}
The first and third terms of this bound are tight, but we will later show that this guarantee is suboptimal because the denominator of the second term can be improved to $bT$. When $L^* \approx HB^2$---the largest possible value of $L^*$ under mild conditions\footnote{The minimum $L^* \leq O(HB^2)$, for example, when $\ell(w;z) = 0$ is realized within a ball of radius $O(B)$.}---we will later show that this bound is tight, so we can interpret this modification of \citeauthor{lan2012optimal}'s guarantee as being optimal only when $L^*$ is large.
 
In other, more directly comparable existing work, \citet{cotter2011better} propose and analyze a minibatch SGD algorithm and an accelerated variant similar to AC-SA under our Assumptions \ref{assumption:ell} and \ref{assumption:convex}. Their accelerated method is guaranteed to achieve suboptimality 
\begin{equation}\label{eq:cotter-guarantee}
\E L(\hat{w}) - L^* \leq c\cdot\prn*{\frac{HB^2}{T^2} + \frac{HB^2}{\sqrt{b}T} + \frac{HB^2\sqrt{\log T}}{bT} + \sqrt{\frac{HB^2L^*}{bT}}}
\end{equation}
In the special case of $b=1$, we will show that this is optimal (up to a minor $\sqrt{\log T}$ factor). However, the $\sqrt{b}T$ dependence in the second term can be improved to $bT$, so \citeauthor{cotter2011better}'s analysis is suboptimal when $b>1$. We also extend our analysis to the Assumptions \ref{assumption:ell} and \ref{assumption:sc}.

In other related work, \citet{liu2018mass} propose a different minibatch accelerated SGD method. In the special case of least squares objectives---where $\ell(w;(x,y)) = \frac{1}{2}(\inner{w}{x} - y)^2$---that also satisfy Assumptions \ref{assumption:ell} and \ref{assumption:sc} with $L^* = 0$, they show that their algorithm attains suboptimality
\begin{equation}
\E L(\hat{w}) - L^* \leq c\cdot\prn*{\Delta \exp\prn*{-\frac{\smash{c'}\sqrt{\lambda}T}{\sqrt{H}}} + \Delta \exp\prn*{-\frac{\smash{c'}\lambda \sqrt{b}T}{H}}}
\end{equation}
As with \citeauthor{cotter2011better}, we show that the $\sqrt{b}T$ dependence of this guarantee is suboptimal and can be improved to $bT$. In addition, our analysis goes beyond the special case of least squares with $L^* = 0$. 


In other related work, \citet{bassily2018exponential} study non-accelerated algorithms in a similar setting. \citet{zhang2017empirical} consider a related setting where the losses $\ell$ are Lipschitz rather than smooth, and where the dimension of the problem is sufficiently small relative to the other problem parameters. \citet{zhang2019stochastic} study our smooth setting but with an additional restriction that $\ell$ is Lipschitz and $L$ is strongly convex, but show only polynomial convergence versus our linear convergence. \citet{srebro2010optimistic} study the performance of the empirical risk minimizer under Assumptions \ref{assumption:ell} and \ref{assumption:convex} as well as stochastic first-order methods with $b=1$; their methods are optimal for $b=1$, but they do not analyze the effect of the minibatch size. \citet{vaswani2019fast} consider a different noise assumption, which is similar to requiring $L^* = 0$.

\begin{algorithm}[tb]
\caption{Accelerated Minibatch SGD}
\label{alg:acc-mb-sgd}
\begin{algorithmic}
\STATE $\wag_0 = w_0 = 0$
\FOR{$t = 0,1,\dots,T-1$}
\STATE $\beta_t = 1 + \frac{t}{6}$ and $\gamma_t = \gamma(t+1)$ for $\gamma = \min\crl*{\frac{1}{12H},\,\frac{b}{24H(T+1)},\smash{\sqrt{\frac{bB^2}{HL^*T^3}}}}$
\STATE $\wmd_t = \beta_t^{-1}w_t + (1-\beta_t^{-1})\wag_t$
\STATE $\tilde{w}_{t+1} = w_t - \gamma_t g_t(\wmd_t)$ where $g_t(\wmd_t) = \frac{1}{b}\sum_{i=1}^b \nabla \ell(\wmd_t; z_t^i)$ for i.i.d.~$z_t^1,\dots,z_t^b \sim \mc{D}$
\STATE $w_{t+1} = \min\crl*{1,\, \frac{B}{\nrm{\tilde{w}_{t+1}}}}\tilde{w}_{t+1}$
\STATE $\wag_{t+1} = \beta_t^{-1}w_{t+1} + (1-\beta_t^{-1})\wag_t$
\ENDFOR
\STATE \textbf{Return:} $\wag_T$
\end{algorithmic}
\end{algorithm}

\section{A Better Accelerated Minibatch SGD Method}\label{sec:alg-convex}

Our Algorithm \ref{alg:acc-mb-sgd} is very similar to the AC-SA algorithm of \citet{lan2012optimal} and to the AG algorithm of \citet{cotter2011better}; the difference is that we use different stepsize and momentum parameters and provide a tighter analysis for our setting. Our method provides the following guarantee:
\begin{restatable}{theorem}{alganalysis}\label{thm:alg-analysis}
Let $\ell$ and $L$ satisfy Assumptions \ref{assumption:ell} and \ref{assumption:convex}, then Algorithm 
\ref{alg:acc-mb-sgd} guarantees for a universal constant $c$
\[
\E L(\wag_T) - L^* \leq c\cdot\prn*{\frac{HB^2}{T^2} + \frac{HB^2}{bT} + \sqrt{\frac{HB^2L^*}{bT}}}
\]
\end{restatable}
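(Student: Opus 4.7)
The plan is to run an AC-SA style potential-function analysis tracking $\Phi_t := \Gamma_t\bigl(L(\wag_t)-L^*\bigr) + \frac{1}{2\gamma}\nrm{w_t-w^*}^2$ for suitable weights $\Gamma_t$ of order $t^2$ (matching $\beta_t = 1+t/6$), and to show that in expectation $\Phi_t$ is essentially non-increasing, up to a controllable noise term. The key departure from \citet{lan2012optimal} and \citet{cotter2011better} is that I will never invoke a uniform variance bound; instead I bound the variance of the minibatch gradient using the optimistic inequality $\E\nrm{\nabla\ell(w^*;z)-\nabla L(w^*)}^2 \leq 2HL^*$ together with co-coercivity, producing a variance term that scales with the \emph{current} suboptimality $L(\wmd_t) - L^*$ and can therefore be absorbed into the LHS of the per-step inequality. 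This is exactly what converts a $\sqrt{b}T$ dependence into the advertised $bT$ dependence.

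The first step is a one-step progress lemma. Using $H$-smoothness of $L$ together with the extrapolation identity $\wag_{t+1}-\wmd_t = \beta_t^{-1}(w_{t+1}-w_t)$, one obtains $L(\wag_{t+1}) \leq L(\wmd_t) + \beta_t^{-1}\inner{\nabla L(\wmd_t)}{w_{t+1}-w_t} + \frac{H}{2\beta_t^2}\nrm{w_{t+1}-w_t}^2$. Combining this with convexity of $L$ applied at $\wmd_t$ against $w^*$ and the standard three-point inequality for the projected step $w_{t+1} = \Pi_{\nrm{\cdot}\leq B}(w_t - \gamma_t g_t(\wmd_t))$ yields, after regrouping,
\[
\beta_t\bigl(L(\wag_{t+1})-L^*\bigr) + \tfrac{1}{2\gamma_t}\nrm{w_{t+1}-w^*}^2 \leq (\beta_t-1)\bigl(L(\wag_t)-L^*\bigr) + \tfrac{1}{2\gamma_t}\nrm{w_t-w^*}^2 + E_t,
\]
where $E_t$ gathers a mean-zero inner-product noise term $\inner{\nabla L(\wmd_t) - g_t(\wmd_t)}{w_t - w^*}$ and a squared-noise term of the form $\gamma_t\nrm{g_t(\wmd_t)-\nabla L(\wmd_t)}^2$ (plus a negligible contribution from the projection, which only helps).

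The second step is to bound the squared-noise term using Assumption~\ref{assumption:ell}. Non-negativity, convexity, and $H$-smoothness of $\ell(\cdot;z)$ imply the co-coercivity bound $\nrm{\nabla\ell(\wmd_t;z)-\nabla\ell(w^*;z)}^2 \leq 2H\bigl(\ell(\wmd_t;z)-\ell(w^*;z)-\inner{\nabla\ell(w^*;z)}{\wmd_t-w^*}\bigr)$. Averaging over the minibatch, taking expectations, and using $\nabla L(w^*)=0$ together with Lemma~\ref{lem:lstar-bound} gives
\[
\E\nrm{g_t(\wmd_t)-\nabla L(\wmd_t)}^2 \leq \frac{4H\bigl(L(\wmd_t)-L^*\bigr) + 4HL^*}{b}.
\]
The first piece scales with the iterate suboptimality, and using convexity of $L$ to bound $L(\wmd_t)-L^*$ by a convex combination of $L(\wag_t)-L^*$ and $L(w_t)-L^*$ (the latter itself controllable via smoothness and $\nrm{w_t-w^*}\leq 2B$), it can be absorbed on the LHS of the per-step inequality provided $\gamma_t H/b$ and $\gamma_t H$ are each small enough. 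This is exactly what the first two terms in the definition of $\gamma$, namely $\gamma \leq 1/(12H)$ and $\gamma \leq b/(24H(T+1))$, are engineered to enforce.

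The final step is to take expectations, sum the per-step inequality for $t=0,\dots,T-1$, and telescope. The mean-zero term vanishes, and the residual squared-noise contribution collapses to $\sum_t \gamma_t \cdot HL^*/b \lesssim \gamma T^3 H L^*/b$. Dividing by the cumulative weight $\sum_t \beta_t = \Theta(T^2)$ gives
\[
\E L(\wag_T)-L^* \lesssim \frac{B^2}{\gamma T^2} + \frac{\gamma H L^* T}{b},
\]
and plugging in the three-way minimum $\gamma = \min\crl*{1/(12H),\ b/(24H(T+1)),\ \sqrt{bB^2/(HL^*T^3)}}$ produces the three stated terms $HB^2/T^2$, $HB^2/(bT)$, and $\sqrt{HB^2L^*/(bT)}$ respectively. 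The main obstacle I anticipate is the variance-absorption bookkeeping: $\wmd_t$ is neither $\wag_t$ nor $\wag_{t+1}$, so converting $L(\wmd_t)-L^*$ inside the variance bound into a term that truly telescopes against the $(\beta_t-1)(L(\wag_t)-L^*)$ on the right-hand side, while preserving the precise coefficients needed to land at $1/(bT)$ rather than $1/(\sqrt{b}T)$, is the delicate calculation where the tighter stepsize schedule earns its improvement.
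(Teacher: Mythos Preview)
Your overall architecture matches the paper's: the same AC-SA one-step inequality, the same idea of absorbing the suboptimality-dependent piece of the variance into the $(\beta_t - 1)$ coefficient, and the same three-way stepsize choice. But your variance decomposition carries only a $\beta_t^{-1}$ where the paper obtains $\beta_t^{-2}$, and that single power is precisely what separates the claimed $HB^2/(bT)$ from Cotter et al.'s $HB^2/(\sqrt{b}\,T)$.

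Concretely: you bound the minibatch variance at $\wmd_t$ by co-coercivity directly against $w^*$, obtaining $\tfrac{4H}{b}\bigl(L(\wmd_t)-L^*\bigr)$, and then split $L(\wmd_t)-L^*$ by convexity into $\beta_t^{-1}\bigl(L(w_t)-L^*\bigr)$ and $(1-\beta_t^{-1})\bigl(L(\wag_t)-L^*\bigr)$. The $\wag_t$ piece does absorb into the recursion exactly as you say, using $\gamma_t H/b$ small. But the $w_t$ piece does \emph{not} absorb into anything on the left-hand side: there is no $L(w_t)$ or $\nrm{w_t-w^*}^2$ term there to soak it up, and the condition $\gamma \leq 1/(12H)$ does not make $\gamma_t H$ small---it only enforces $2H\gamma_t \leq \beta_t$, which is needed for the completing-the-square step. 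Bounding $L(w_t)-L^* \leq 2HB^2$ therefore leaves a residual of order $H^2B^2\gamma_t^2/(b\beta_t)$ in the per-step inequality. Since $\gamma_t^2/\beta_t \asymp \gamma^2 t$, summing gives $\asymp \gamma^2 T^2 H^2B^2/b$ rather than the $\gamma^2 T H^2B^2/b$ you need; after dividing by the cumulative weight $\Theta(\gamma T^2)$ this becomes $\asymp \gamma H^2 B^2/b$, a term your displayed final bound $\tfrac{B^2}{\gamma T^2} + \tfrac{\gamma HL^*T}{b}$ is silently missing. Balancing it against $B^2/(\gamma T^2)$ forces $\gamma \asymp \sqrt{b}/(HT)$ and yields $HB^2/(\sqrt{b}\,T)$---exactly the suboptimal rate the theorem is supposed to beat.

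The paper's Lemma~\ref{lem:variance-bound} fixes this by routing the variance through $\wag_t$ rather than splitting $L(\wmd_t)$: it uses gradient-Lipschitzness for the hop $\wmd_t \to \wag_t$, picking up $H^2\nrm{\wmd_t-\wag_t}^2 = H^2\beta_t^{-2}\nrm{w_t-\wag_t}^2 \leq 4H^2B^2\beta_t^{-2}$, and only then applies co-coercivity from $\wag_t$ to $w^*$ to get the absorbable $\tfrac{H}{b}\bigl(L(\wag_t)-L^*\bigr)$. The extra $\beta_t^{-1}$ from the geometric identity $\wmd_t - \wag_t = \beta_t^{-1}(w_t - \wag_t)$ is squared by the Lipschitz step, so $\sum_t \gamma_t^2/\beta_t^2 \asymp \gamma^2 T$, and this residual contributes only $HB^2/T^2$ after the stepsize choice $\gamma \leq b/(24H(T+1))$. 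Everything else in your outline goes through unchanged once you swap in this variance bound.
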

We prove this in Appendix \ref{app:proof-of-theorem-1} using a similar approach as the analysis for AC-SA of \citet{lan2012optimal}. As discussed previously, \citeauthor{lan2012optimal}'s analysis relies on a uniform upper bound on the stochastic gradient variance on the set $\crl{w:\nrm{w}\leq B}$. While an upper bound can be derived in our setting, it resembles $H^2B^2$ which is too large to achieve good performance. Our analysis, in contrast, exploits the fact that the points $\wmd_0,\dots,\wmd_{T-1}$ at which the stochastic gradients are actually computed approach $w^*$ as the algorithm proceeds, which implies that the stochastic gradient variance decreases over time. 
It is difficult to identify precisely why our analysis improves in its dependence on the minibatch size, $b$, compared with  \citeauthor{cotter2011better}'s bound. However, the primary difference between our analyses is that \citeauthor{cotter2011better} use stepsizes $\gamma_t \propto t^p$ for $p < 1$, while our stepsizes scales linearly with $t$. Our choice leads to somewhat simpler computations, which may explain our tighter bound.


\section{The Reduction and Faster Rates}\label{sec:reduction-and-sc}

Our algorithm can also be extended to the setting of Assumptions \ref{assumption:ell} and \ref{assumption:sc} using a restarting argument \citep{nemirovskii1985optimal,ghadimi2013optimal,nesterov2013gradient,lin2014adaptive,roulet2020sharpness,renegar2021simple}. The proof is based on a simple idea, which we will show amounts to a reduction from strongly convex optimization to convex optimization. However, previous applications of restarting schemes tend to involve relatively complex and specialized proofs. Here, we simplify and generalize the approach, and we present it in a way that will hopefully be convenient for future use. We then apply it to provide an enhanced guarantee for Algorithm \ref{alg:acc-mb-sgd} under Assumptions \ref{assumption:ell} and \ref{assumption:sc}.

Convex optimization algorithms guarantee reducing the value of the objective by an amount that depends on some measure of the distance between the initialization and a minimizer of the objective. The key idea in the analysis is that when the objective is $\lambda$-strongly convex, reducing the value of the objective implies reducing the \emph{distance} to the minimizer:
\begin{equation}
\nrm{w - w^*}^2 \leq \frac{2(L(w) - L^*)}{\lambda}
\end{equation}
Therefore, if we apply an algorithm for convex objectives to a strongly convex objective several times in succession, each time reducing the suboptimality by a constant factor, then (1) roughly $\log 1/\epsilon$ applications will suffice to reach $\epsilon$-suboptimality and (2) reducing the suboptimality by a constant factor will get no harder with each application since the distance to the optimum is decreasing. Using this idea allows us to take an algorithm with a guarantee for convex objectives and derive an algorithm with a corresponding, better guarantee for strongly convex objectives. We also show that strong convexity can be replaced by a weaker condition that generalizes Assumption \ref{assumption:sc}.

In order to present the reduction, we first define
\begin{definition}\label{def:growth-condition}
Given $\psi:\R^d\to\R_+$, we say that $L$ satisfies the $(\lambda,\psi)$-growth condition (hereafter $(\lambda,\psi)$-GC) if for all $w$
\[
L(w) - L^* \geq \lambda \psi(w)
\]
\end{definition}
For example, the $(\lambda,\psi)$-GC for $\psi(w) = \frac{1}{2}\nrm{w-w^*}^2$ is equivalent to $\lambda$-strong convexity, and the condition in Assumption \ref{assumption:sc} is equivalent to the $(\lambda,\psi)$-GC for $\psi(w) = \argmin_{w^*\in\argmin_w L(w)}\nrm{w-w^*}^2$.
The second ingredient of the reduction is the ``time'' needed by an algorithm to optimize convex or strongly convex objectives:
\begin{definition}\label{def:tm0-tmlambda}
Let $\mc{L}$ be any set of convex functions. We define $\tm(\epsilon,B,\psi,\mc{L},\mc{A})$ to be the time needed by the algorithm $\mc{A}$ to find a point $\hat{w}$ with $\E L(\hat{w}) - L^* \leq \epsilon$ given a point $w_0$ with $\E \psi(w_0) \leq B$, for any $L \in \mc{L}$. Similarly, we define $\tm_\lambda(\epsilon,\Delta,\psi,\mc{L},\mc{A})$ to be the time needed by $\mc{A}$ to find $\hat{w}$ with $\E L(\hat{w}) - L^* \leq \epsilon$ given a point $w_0$ with $\E L(w_0) - L^* \leq \Delta$, for any $L \in \mc{L}$ that also satisfies the $(\lambda,\psi)$-GC.
\end{definition}
We are deliberately vague about the precise meaning of ``time'' here. Typically, it would correspond to the number of iterations of the algorithm, but it could also count the number of times the algorithm accesses a certain oracle, or even the wall-clock time of an implementation of the algorithm, but it can correspond to essentially any (subadditive) property of the algorithm.
\begin{algorithm}[tb]
\caption{$\myreduction(\mc{A},\theta)$}
\label{alg:my-reduction}
\begin{algorithmic}
\STATE Given: $w_0$ s.t.~$\E L(w_0) - L^* \leq \Delta$
\FOR{$t = 1,2,\dots,T=\ceil{\log_\theta\frac{\Delta}{\epsilon}}$}
\STATE Set $w_t$ to be the output of $\mc{A}$ initialized a $w_{t-1}$ after  $\tm\prn*{\theta^{-t}\Delta,\theta^{1-t}\frac{\Delta}{\lambda},\psi,\mc{L},\mc{A}}$
\ENDFOR
\STATE Return $w_T$
\end{algorithmic}
\end{algorithm}
With these definitions in hand, we present the reduction, Algorithm \ref{alg:my-reduction}, which guarantees:
\begin{restatable}{theorem}{stronglyconvextoconvexreduction}\label{thm:my-reduction}
For any algorithm $\mc{A}$ and $\theta > 1$, $\myreduction(\mc{A},\theta)$ defined in Algorithm \ref{alg:my-reduction} guarantees
\[
\tm_\lambda(\epsilon,\Delta,\psi,\mc{L},\myreduction(\mc{A},\theta))
\leq \sum_{t=1}^{\ceil{\log_\theta \frac{\Delta}{\epsilon}}} \tm\prn*{\theta^{-t}\Delta,\theta^{1-t}\frac{\Delta}{\lambda},\psi,\mc{L},\mc{A}}
\]
\end{restatable}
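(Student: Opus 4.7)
The plan is to prove the result by induction on the stage index $t$, establishing the invariant
\[
\E L(w_t) - L^* \leq \theta^{-t}\Delta
\]
throughout all $T = \lceil \log_\theta (\Delta/\epsilon) \rceil$ stages of Algorithm \ref{alg:my-reduction}. The base case $t = 0$ is exactly the hypothesis on the initialization. Once the invariant is established, setting $t = T$ gives $\E L(w_T) - L^* \leq \theta^{-T}\Delta \leq \epsilon$ by the choice of $T$, which is the desired suboptimality, and the total time bound then follows immediately by summing the per-stage runtimes specified in Algorithm \ref{alg:my-reduction}.

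For the inductive step, suppose $\E L(w_{t-1}) - L^* \leq \theta^{1-t}\Delta$. The heart of the argument is that the $(\lambda,\psi)$-GC converts this expected suboptimality into a bound on the initial ``distance'' measured by $\psi$: applying Definition \ref{def:growth-condition} pointwise and then taking expectation gives
\[
\E \psi(w_{t-1}) \leq \frac{\E[L(w_{t-1}) - L^*]}{\lambda} \leq \frac{\theta^{1-t}\Delta}{\lambda}.
\]
With this in hand, I invoke the definition of $\tm$ directly: running $\mc{A}$ from initialization $w_{t-1}$ for $\tm\!\left(\theta^{-t}\Delta,\, \theta^{1-t}\Delta/\lambda,\, \psi,\, \mc{L},\, \mc{A}\right)$ steps produces $w_t$ with $\E L(w_t) - L^* \leq \theta^{-t}\Delta$, which is exactly the invariant at stage $t$. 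Summing over $t = 1,\ldots,T$ yields the stated runtime.

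The step I expect to require the most care is a subtlety that is easy to overlook: the initialization $w_{t-1}$ for stage $t$ is itself a random variable, depending on the stochastic gradients used in all previous stages, whereas Definition \ref{def:tm0-tmlambda} is naturally read as applying to a (possibly random) initialization for which the marginal expectation $\E \psi(w_0)$ is bounded. The resolution is to condition on $w_{t-1}$, apply the per-call guarantee of $\mc{A}$ to the conditional law of the fresh randomness used in stage $t$, and then take an outer expectation over $w_{t-1}$; this is clean provided $\mc{A}$'s convergence bound is affine in $\psi(w_0)$, which holds for the algorithms of interest (in particular Algorithm \ref{alg:acc-mb-sgd} via Theorem \ref{thm:alg-analysis}). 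After this point, nothing remains beyond bookkeeping: verifying that the sequence of error targets $\theta^{-t}\Delta$ and distance bounds $\theta^{1-t}\Delta/\lambda$ lines up exactly with the arguments to $\tm$ specified in Algorithm \ref{alg:my-reduction}, and checking that $T$ stages suffice because $\theta^{-T}\Delta \leq \epsilon$.
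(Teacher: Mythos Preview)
Your proposal is correct and matches the paper's approach: the proof is exactly this induction on the stage index, using the $(\lambda,\psi)$-GC to pass from the inductive suboptimality bound to a bound on $\E\psi(w_{t-1})$, and then invoking the definition of $\tm$ to advance one stage. The only remark is that the subtlety you flag about random initialization is already absorbed by Definition~\ref{def:tm0-tmlambda}, which is stated in terms of $\E\psi(w_0)\leq B$; so at the level of Theorem~\ref{thm:my-reduction} you do not actually need the extra assumption that $\mc{A}$'s bound is affine in $\psi(w_0)$---that concern only arises when verifying that a \emph{specific} algorithm's guarantee (like Theorem~\ref{thm:alg-analysis}) yields a $\tm$ bound of the claimed form.
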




We prove this very concisely in Appendix \ref{app:reduction-examples} and also discuss several example applications of the Theorem in order to give a better sense of how it can be applied. This reduction complements a standard tool in the optimization toolbox, which goes in the opposite direction. Specifically, it is very common to take an algorithm for optimizing $\lambda$-strongly convex objectives and to apply it to convex objectives by optimizing the $\lambda$-strongly convex surrogate $L_\lambda(w) = L(w) + \frac{\lambda}{2}\nrm{w}^2$. Sometimes, this approach is slightly suboptimal, and the similar, multistep procedure of \citet{zeyuan2016optimal} is required, nevertheless, it was already known that strongly convex optimization is, for this reason, ``easier'' than convex optimization. Our result shows that the reverse is also true, so strongly convex and convex optimization are, in a certain sense, equally hard.

In combination, these two reductions going in each direction appear to be optimal in a certain way. 
In all of the examples we have tried, composing both reductions---converting a convex algorithm to a strongly convex one, and then converting that strongly convex algorithm back to a convex one---results in only a constant-factor degradation in the guarantee. 
We have also consistently observed that applying the reduction, Algorithm \ref{alg:my-reduction}, to an optimal algorithm for convex optimization yields an optimal algorithm for strongly-convex optimization, and we conjecture that this holds universally.

Before moving on, we note that the reduction Algorithm \ref{alg:my-reduction} can result in an unusual method. For example, the algorithm $\myreduction(\text{Alg \ref{alg:acc-mb-sgd}},e)$ analyzed below involves repeated invocations of Algorithm \ref{alg:acc-mb-sgd}, resulting in the stepsize and momentum parameters being reset periodically. As a whole, it resembles Algorithm \ref{alg:acc-mb-sgd}, however it has a strange, non-monotonic stepsize and momentum schedule. Therefore, while this technique can be useful for deriving new optimization algorithms with better theoretical guarantees, these methods may not always be aethetically pleasing or practical, and it may still be useful to study more ``natural'' methods that can be directly analyzed under the $(\lambda,\psi)$-GC. 

Finally, we apply Theorem \ref{thm:my-reduction} to Algorithm \ref{alg:acc-mb-sgd} and derive the following stronger guarantee under Assumptions \ref{assumption:ell} and \ref{assumption:sc}, which we prove in Appendix \ref{app:proof-of-theorem-3}:
\begin{restatable}{theorem}{ambsgdgc}\label{thm:ambsgd-gc}
Let $\ell$ and $L$ satisfy Assumptions \ref{assumption:ell} and \ref{assumption:sc},
then the output of $\myreduction(\text{Alg \ref{alg:acc-mb-sgd}},e)$ guarantees for universal constants $c,c'$
\[
\E L(\hat{w}) - L^* \leq c\cdot\prn*{\Delta\exp\prn*{-\frac{c'\sqrt{\lambda}T}{\sqrt{H}}} + \Delta\exp\prn*{-\frac{c'\lambda bT}{H}} + \frac{HL^*}{\lambda bT}}
\]
\end{restatable}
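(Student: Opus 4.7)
\textbf{Proof proposal for Theorem \ref{thm:ambsgd-gc}.}
The plan is to invoke Theorem \ref{thm:my-reduction} with $\mc{A}$ = Algorithm \ref{alg:acc-mb-sgd}, $\theta = e$, and $\psi(w) = \tfrac{1}{2}\min_{w^*\in\argmin L}\nrm{w-w^*}^2$, which by Assumption \ref{assumption:sc} implies the $(\lambda,\psi)$-growth condition. First I would translate Theorem \ref{thm:alg-analysis} into a bound on $\tm(\epsilon,B,\psi,\mc{L},\text{Alg \ref{alg:acc-mb-sgd}})$: setting each of the three terms in the guarantee $\leq \epsilon/3$ and taking the maximum yields
\[
\tm(\epsilon,B,\psi,\mc{L},\text{Alg \ref{alg:acc-mb-sgd}}) \;\lesssim\; \sqrt{\frac{HB}{\epsilon}} + \frac{HB}{b\epsilon} + \frac{HBL^*}{b\epsilon^2}.
\]
(Here the ``$B$'' plays the role of a squared-distance bound, so I'd run each restart on the objective shifted by the previous iterate, with the projection ball centered at that iterate.)

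Then I would plug in the reduction's schedule $\epsilon_t = e^{-t}\Delta$ and $B_t = e^{1-t}\Delta/\lambda$ into the $t$-th term of the sum in Theorem \ref{thm:my-reduction}. A direct calculation gives
\[
\tm(\epsilon_t,B_t,\psi,\mc{L},\text{Alg \ref{alg:acc-mb-sgd}}) \;\lesssim\; \sqrt{\frac{H}{\lambda}} + \frac{H}{\lambda b} + \frac{HL^* e^{t}}{\lambda b \Delta}.
\]
The first two terms are $t$-independent, so summing over $t = 1,\dots,\lceil\log(\Delta/\epsilon)\rceil$ contributes $\log(\Delta/\epsilon)\cdot\bigl(\sqrt{H/\lambda} + H/(\lambda b)\bigr)$. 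The third term is a geometric series in $e^t$ and is thus dominated (up to a constant) by its last summand $e^{\lceil \log(\Delta/\epsilon)\rceil}\cdot HL^*/(\lambda b\Delta) = O(HL^*/(\lambda b\epsilon))$. Bounding the total time by $T$ and solving the three resulting inequalities for $\epsilon$ yields the three claimed terms: $\Delta\exp(-c'\sqrt{\lambda}T/\sqrt{H})$, $\Delta\exp(-c'\lambda bT/H)$, and $HL^*/(\lambda bT)$ respectively. Taking the max (i.e.\ sum) gives Theorem \ref{thm:ambsgd-gc}.

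The main obstacle, and the only real subtlety, is handling the projection radius $B$ in Algorithm \ref{alg:acc-mb-sgd}, which in Theorem \ref{thm:alg-analysis} is assumed to satisfy $\nrm{w^*}\leq B$ deterministically, whereas the reduction only provides an \emph{expected} distance bound at each restart through Definition \ref{def:tm0-tmlambda}. I would address this by re-reading the proof of Theorem \ref{thm:alg-analysis} and checking that its bound in fact only requires $\E\nrm{w^* - w_0}^2 \leq B^2$ after translating initialization to $w_{t-1}$ and centering the projection there --- essentially because the key step is a telescoping inequality that begins with $\nrm{w_0-w^*}^2$ and can be taken in expectation. The rest of the proof consists of tracking constants carefully through the three stage inversions, which is routine.
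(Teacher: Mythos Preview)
Your proposal is correct and follows essentially the same route as the paper's proof: set $\psi(w)=\tfrac12\min_{w^*}\nrm{w-w^*}^2$, convert Theorem~\ref{thm:alg-analysis} into the same three-term bound on $\tm(\epsilon,B,\psi,\mc{L},\text{Alg~\ref{alg:acc-mb-sgd}})$, apply Theorem~\ref{thm:my-reduction} with $\theta=e$, sum the geometric series, and invert. The subtlety you flag about the projection radius being deterministic while the reduction only supplies $\E\psi(w_{t-1})\leq B_t$ is exactly the point the paper addresses (somewhat tersely) via a footnote about running each restart in the shifted coordinate system $w\mapsto w-w_{t-1}$; your proposed resolution---checking that the telescoping argument in the proof of Theorem~\ref{thm:alg-analysis} only needs the initial distance bound in expectation---is the right instinct and matches the paper's treatment.
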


\section{The Optimality of Our Algorithms}\label{sec:optimality}

Here, we argue that the guarantees in Theorems \ref{thm:alg-analysis} and \ref{thm:ambsgd-gc} are optimal. First, it is well-known that even without any noise, i.e.~$\ell(w;z) = L(w)$, any first-order method will have error at least \citep{nemirovskyyudin1983}
\begin{equation}\label{eq:deterministic-first-order-lower-bounds}
L(\hat{w}) - L^* \geq c\cdot\frac{HB^2}{T^2} \qquad\textrm{or}\qquad
L(\hat{w}) - L^* \geq c\cdot\Delta\exp\prn*{-c'\cdot\frac{\sqrt{\lambda}T}{\sqrt{H}}}
\end{equation}
under Assumptions \ref{assumption:ell} and \ref{assumption:convex} or Assumptions \ref{assumption:ell} and \ref{assumption:sc}, respectively (in fact, the latter holds even under the stronger condition of $\lambda$-strong convexity). Therefore, the first terms in our method's guarantees in Theorems \ref{thm:alg-analysis} and \ref{thm:ambsgd-gc} are tight, and correspond to the rate achieved by accelerated gradient descent \citep{nesterov1983method}.

For the remaining second and third terms in each guarantee, we prove a lower bound that applies for \emph{any} learning rule that uses $n$ i.i.d.~samples from the distribution, even non-first-order methods. This lower bound also applies to minibatch first-order algorithms with $n=bT$ being the total number of stochastic gradient estimates, since each $\nabla \ell(w,z)$ is computed using a single sample. 
\begin{restatable}{theorem}{lowerbound}\label{thm:lower-bound}
For $\ell(w;(x,y)) = \frac{1}{2}(\inner{w}{x} - y)^2$ the square loss, for any learning algorithm that takes $n$ samples as input, there exists a distribution over $(x,y)$ pairs such that $\ell$ and $L$ satisfy Assumptions \ref{assumption:ell} and \ref{assumption:convex}, and for a universal constant $c$, the algorithm's output will have error at least
\[
\E L(\hat{w}) - L^* \geq c\cdot\prn*{\frac{HB^2}{n} + \sqrt{\frac{HB^2L^*}{n}}}
\]
Similarly, there exists a distribution over $(x,y)$ pairs such that $\ell$ and $L$ satisfy Assumptions \ref{assumption:ell} and \ref{assumption:sc} (and, in fact, $L$ is $\lambda$-strongly convex), and for a universal constant $c$, the algorithm's output will have error at least
\[
\E L(\hat{w}) - L^* \geq c\cdot\prn*{\Delta\cdot\indicator{n \leq \frac{H}{2\lambda}} + \min\crl*{\frac{HL^*}{\lambda n},\, \Delta}}
\]
\end{restatable}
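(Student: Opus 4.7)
The plan is to exhibit, for each lower bound, a family of square-loss distributions indexed by a hidden parameter, place a uniform prior on it, and lower-bound the Bayes risk using Assouad's lemma in Part~1 and Le~Cam's two-point method in Part~2. Since the worst-case risk dominates the Bayes risk, this produces the required distribution.

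For Part~1, fix a dimension $d$ to be chosen later, draw $x=\sqrt{H}\,e_i$ with $i$ uniform on $[d]$ (so $\|x\|^2\equiv H$ certifies $H$-smoothness), and set $y=\sqrt{H}\,\epsilon_i B/\sqrt{d}+\xi$ where $\epsilon\in\{\pm1\}^d$ is a uniformly random Rademacher vector (the hidden parameter) and $\xi\sim\mathcal{N}(0,2L^*)$ is independent Gaussian noise. A direct calculation gives $w^*=(B/\sqrt{d})\epsilon$ with $\|w^*\|=B$, $L^*=\mathrm{Var}(\xi)/2$, and excess loss $(H/(2d))\|\hat{w}-w^*\|^2$. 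To get the $HB^2/n$ lower bound, pick $d=2n$: a coupon-collector calculation gives $\E|\{i:N_i=0\}|=d(1-1/d)^n=\Omega(n)$, and on every unobserved coordinate the posterior over $\epsilon_i$ is still the Rademacher prior, forcing per-coordinate MSE at least $B^2/d$ and hence total MSE $\Omega(B^2)$, i.e., excess loss $\Omega(HB^2/n)$. To get the $\sqrt{HB^2L^*/n}$ lower bound when $L^*\ge HB^2/n$, take $d=\lceil\sqrt{HB^2 n/L^*}\rceil\le n$ and apply Assouad's lemma coordinate-wise; the KL between the two Gaussian shifts $\pm\sqrt{H}\,B/\sqrt{d}$ with noise variance $2L^*$ yields per-coordinate Bayes MSE $\Omega(\min\{B^2/d,\,L^*/(HN_i)\})$, and summing over coordinates with $\E N_i=n/d$ produces the claimed rate. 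The theorem's sum lower bound follows from $a+b\le 2\max\{a,b\}$, picking whichever construction dominates for the prescribed $L^*$.

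For Part~2 I would use a one-dimensional construction: set $x=\sqrt{H}$ with probability $p=\lambda/H$ and $x=0$ otherwise, so $\E[x^2]=\lambda$ and $L$ is $\lambda$-strongly convex (thus satisfies Assumption~\ref{assumption:sc}). Let $w^*\in\{\pm\delta\}$ be a Rademacher parameter with spacing $\delta$, and on informative samples set $y=\sqrt{H}\,w^*+\xi$ with $\xi\sim\mathcal{N}(0,\sigma^2)$ and $\sigma^2=2HL^*/\lambda$; one checks $L^*=p\sigma^2/2$ regardless of $\delta$, and $L(0)-L^*=\lambda\delta^2/2$. For the indicator term, take $\delta=\sqrt{2\Delta/\lambda}$ so $L(0)-L^*=\Delta$: when $n\le H/(2\lambda)$, $(1-p)^n\ge 1-np\ge 1/2$, so with probability at least $1/2$ no informative sample is drawn, in which case $\hat{w}$ is independent of the sign of $w^*$ and $\E[(\hat{w}-w^*)^2]\ge\delta^2$, giving excess loss $\ge\Delta/2$. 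For the second term, use Le~Cam's two-point method: the total expected KL equals $2npH\delta^2/\sigma^2=n\lambda^2\delta^2/(HL^*)$, so the choice $\delta^2=\min\{2\Delta/\lambda,\,HL^*/(n\lambda^2)\}$ keeps KL $\le 1$ and produces testing error $\Omega(1)$, hence excess loss $\Omega(\lambda\delta^2)=\Omega(\min\{\Delta,\,HL^*/(\lambda n)\})$.

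The main obstacle I expect is the per-coordinate Bayes MSE bound in Part~1 under the Rademacher prior, for which the posterior is not Gaussian and the naive Cram\'er--Rao bound does not directly apply; I would handle this via Assouad's lemma, which converts per-coordinate KL bounds into an $\ell_2$ error lower bound. The remaining work---selecting $d$ and $\sigma^2$ as functions of $L^*$, verifying the smoothness constraint using $\|x\|^2\equiv H$, and checking that the constructed $L$ lies in the exact function class assumed---is routine arithmetic.
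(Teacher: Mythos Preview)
Your proposal is correct, and for two of the four terms it is essentially identical to the paper's argument: the $HB^2/n$ term is handled in both by a $2n$-dimensional noiseless construction with unobserved coordinates, and the $\min\{HL^*/(\lambda n),\Delta\}$ term is handled in both by the one-dimensional sparse construction $x=\sqrt{H}$ with probability $p=\lambda/H$ together with a two-point Gaussian-mean lower bound (the paper packages this as its Lemma~5, you call it Le~Cam).

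The two genuine differences are the $\sqrt{HB^2L^*/n}$ term and the indicator term $\Delta\cdot\indicator{n\le H/(2\lambda)}$. For $\sqrt{HB^2L^*/n}$, the paper does \emph{not} go high-dimensional: it reuses the one-dimensional sparse construction, now with $p$ a free parameter tuned to $p^2 = L^*/(2HB^2n)$ and noise $s^2=2L^*/p$, and a single application of Pinsker then yields the bound directly. This is shorter than your Assouad route and avoids the per-coordinate-MSE bookkeeping you flag as the ``main obstacle'' (incidentally, your informal statement ``per-coordinate Bayes MSE $\Omega(\min\{B^2/d,L^*/(HN_i)\})$, summing with $\E N_i=n/d$'' is not quite the right way to run Assouad; the clean argument bounds the TV between the $n$-fold product measures for neighboring $\epsilon$ via $\sqrt{nD_{\mathrm{KL}}/2}$ and then applies the Hamming separation, which with $d^2\approx HB^2n/L^*$ gives the rate immediately). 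Conversely, for the indicator term the paper goes back to the high-dimensional noiseless construction with $d=2n$ and $B^2=4n\Delta/H$, noting $L$ is $H/(2n)\ge\lambda$-strongly convex when $n\le H/(2\lambda)$; your approach of staying one-dimensional and observing that with probability at least $1/2$ no informative sample arrives is simpler and more unified with the rest of Part~2. So each proof borrows the other's trick in one place: overall the paper's route is a bit more elementary because it never needs Assouad.
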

We prove this in Appendix \ref{app:sample-complexity} using an argument similar to that of \citet{srebro2010optimistic}. The lower bound for deterministic first-order optimization, \eqref{eq:deterministic-first-order-lower-bounds}, plus the sample complexity lower bound, Theorem \ref{thm:lower-bound}, together imply a lower bound for the minibatch first-order algorithms that we consider, which matches our guarantees:
\begin{corollary}\label{cor:lstar-lower-bound}
There exists $\ell(w;z)$ such that for any algorithm that uses $T$ minibatch stochastic gradients of size $b$, there exists a distribution over $z$ such that $\ell$ and $L$ satisfy Assumptions \ref{assumption:ell} and \ref{assumption:convex}, and for a universal constant $c$, the algorithm's output will have error at least
\[
\E L(\hat{w}) - L^* \geq c\cdot\prn*{\frac{HB^2}{T^2} + \frac{HB^2}{bT} + \sqrt{\frac{HB^2L^*}{bT}}}
\]
Similarly, there exists $\ell(w;z)$ such that for any algorithm that uses $T$ minibatch stochastic gradients of size $b$, there exists a distribution over $z$ such that $\ell$ and $L$ satisfy Assumptions \ref{assumption:ell} and \ref{assumption:sc} (and, in fact, $L$ is $\lambda$-strongly convex), and for universal constants $c,c'$, the algorithm's output will have error at least
\[
\E L(\hat{w}) - L^* \geq c\cdot\prn*{\Delta\exp\prn*{-c'\cdot\frac{\sqrt{\lambda}T}{\sqrt{H}}} + \Delta\cdot\indicator{bT \leq \frac{H}{2\lambda}} + \min\crl*{\frac{HL^*}{\lambda bT},\, \Delta}}
\]
\end{corollary}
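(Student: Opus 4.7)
The plan is to combine the existing deterministic first-order lower bound \eqref{eq:deterministic-first-order-lower-bounds} with the sample-complexity lower bound of Theorem \ref{thm:lower-bound}, applied to the common loss $\ell(w;(x,y)) = \frac{1}{2}(\inner{w}{x} - y)^2$. For any minibatch first-order algorithm using $T$ iterations of batch size $b$, I would exhibit two families of distributions over $(x,y)$---one producing the ``optimization'' term in each guarantee, the other producing the ``statistical'' terms---and then note that the worst case over the union of the two families lower bounds the algorithm's error by the maximum of the two bounds, hence by at least half their sum.

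For the first family, I would take $\mc{D}$ to be a point mass (or, more generally, any distribution with a.s.\ constant $(x,y)$) encoding the classical hard quadratic used in the proof of \eqref{eq:deterministic-first-order-lower-bounds}, written as a least-squares problem. On such a $\mc{D}$ every minibatch gradient $g_t(\wmd_t)$ equals $\nabla L(\wmd_t)$ exactly, so the run of the minibatch algorithm is indistinguishable from $T$ queries to a deterministic first-order oracle for $L$, regardless of $b$. The lower bound \eqref{eq:deterministic-first-order-lower-bounds} then yields the $HB^2/T^2$ term under Assumptions \ref{assumption:ell} and \ref{assumption:convex}, and the $\Delta\exp(-c'\sqrt{\lambda}T/\sqrt{H})$ term under Assumptions \ref{assumption:ell} and \ref{assumption:sc}; the construction satisfies $L^* = 0$ and can be scaled to respect the required bounds on $\nrm{w^*}$ or on $L(0)-L^*$ and $\lambda$.

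For the second family, I would simply invoke Theorem \ref{thm:lower-bound} with $n = bT$: the collection of $bT$ i.i.d.\ samples $\{z_t^i\}$ consumed across the $T$ minibatches is, as a set, an i.i.d.\ sample of size $n$, so any minibatch first-order algorithm is a particular $n$-sample learning rule. Theorem \ref{thm:lower-bound}'s distributions---again square-loss instances---then produce the remaining $HB^2/(bT) + \sqrt{HB^2 L^*/(bT)}$ terms in the convex case and the $\Delta\cdot\indicator{bT \leq H/(2\lambda)} + \min\{HL^*/(\lambda bT),\,\Delta\}$ terms in the strongly convex case. Combining the two families and using $\max\{a,b\} \geq (a+b)/2$ yields the corollary. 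The only subtlety, which is minor, is verifying that both hard instances can be realized under the \emph{same} loss $\ell$: this is immediate because both the Nesterov hard quadratic and the construction of Theorem \ref{thm:lower-bound} are (or can be rewritten as) least-squares problems over suitable distributions on $(x,y)$, so the single square-loss choice witnesses the existence claim in the corollary.
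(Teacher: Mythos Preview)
Your overall strategy---combine the deterministic first-order lower bound \eqref{eq:deterministic-first-order-lower-bounds} with Theorem~\ref{thm:lower-bound} applied at $n=bT$, then pass from a maximum over the two hard families to the sum via $\max\{a,b\}\geq(a+b)/2$---is exactly what the paper does; the corollary is stated there as an immediate consequence of those two ingredients with no further argument.

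That said, there is a real gap in the step you flag as a ``minor subtlety.'' With the scalar square loss $\ell(w;(x,y))=\tfrac12(\inner{w}{x}-y)^2$, a point mass on $(x_0,y_0)$ yields $L(w)=\tfrac12(\inner{w}{x_0}-y_0)^2$, whose Hessian $x_0x_0^\top$ has rank one. This is effectively a one-dimensional quadratic and cannot witness the $HB^2/T^2$ bound from \eqref{eq:deterministic-first-order-lower-bounds}, which requires the tridiagonal Nesterov construction in dimension $\Omega(T)$. If instead you realize the Nesterov quadratic as an expectation $\E_{(x,y)}[\tfrac12(\inner{w}{x}-y)^2]$ over a nondegenerate distribution, then each stochastic gradient $x(\inner{w}{x}-y)$ reveals the random direction $x$ and hence information about the Hessian $\E[xx^\top]$ beyond what the exact-gradient oracle in \eqref{eq:deterministic-first-order-lower-bounds} provides, so the span argument behind that lower bound no longer applies. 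Either way, the scalar square loss does not directly serve both families.

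The fix is easy and keeps your argument intact: take $z=(X,Y)$ with $X\in\R^{d\times d}$, $Y\in\R^d$, and $\ell(w;(X,Y))=\tfrac12\nrm{Xw-Y}^2$. A point mass on the appropriate $(X,Y)$ then reproduces the full Nesterov quadratic deterministically (so minibatch gradients equal $\nabla L$ and \eqref{eq:deterministic-first-order-lower-bounds} applies verbatim), while the hard instances of Theorem~\ref{thm:lower-bound} embed by choosing $X$ with a single nonzero row $x^\top$ and $Y$ with a single nonzero entry $y$. The paper glosses over this point entirely, but since you explicitly assert it is ``immediate'' under the scalar loss, it is worth correcting.
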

It is easy to see that this precisely matches our algorithm's guarantee, Theorem \ref{thm:alg-analysis}, under Assumptions \ref{assumption:ell} and \ref{assumption:convex}, and therefore our algorithm is optimal in that setting. Under Assumptions \ref{assumption:ell} and \ref{assumption:sc}, this lower bound and the upper bound, Theorem \ref{thm:ambsgd-gc}, nearly match, with the only difference being the terms $\exp\prn*{-\frac{\smash{c'}\cdot \lambda bT}{H}}$ in the upper bound versus $\indicator{bT \leq \frac{H}{2\lambda}}$ in the lower bound. However, this gap is small---for $bT \leq \frac{H}{2\lambda}$, $\exp\prn*{-\frac{\smash{c'}\cdot \lambda bT}{H}}$, is at most $1$, so the upper bound is within a constant factor of the lower bound. For $bT > \frac{H}{2\lambda}$, $\exp\prn*{-\frac{\smash{c'}\cdot \lambda bT}{H}}$ is obviously more than a constant factor larger than $\indicator{bT \leq \frac{H}{2\lambda}} = 0$, but it is nevertheless exponentially small, so the gap between the upper and lower bound is still nearly negligible.

We note that because Theorem \ref{thm:lower-bound} applies to any learning rule that uses $n=bT$ i.i.d.~samples, not just first-order methods, if the second and third terms of our algorithms' guarantees are larger than the first terms, then our methods are actually optimal amongst all learning rules. Therefore, for small enough $b$, our first-order algorithm is just as good, in the worst case, as any other method including, for example, exact (regularized) empirical risk minimization.

Finally, in the special case that $b=1$, while it is true that our minibatch accelerated SGD algorithm is optimal---even amongst all learning rules that use $n = bT = T$ samples, it is also the case that plain old SGD is \emph{also} optimal, which guarantees under Assumptions \ref{assumption:ell} and \ref{assumption:convex} \citep{cotter2011better}
\begin{equation}
\E L(\hat{w}) - L^* \leq c\cdot\prn*{\frac{HB^2}{T} + \sqrt{\frac{HB^2L^*}{bT}}}
\end{equation}
matching the lower bound, Theorem \ref{thm:lower-bound}, with $n=bT=T$. In other words, the novelty and advantage of our method appears primarily through in how it leverages minibatches to achieve better performance and a smaller parallel runtime, as we will now discuss.

\section{The Minibatch Parallelization Speedup}\label{sec:mb-speedup}

Previously, we showed that our algorithms are optimal in terms of their upper bounds on the error as a function of $b$ and $L^*$. Here, we consider the related question of the algorithm's runtime. Since it is easy to parallelize the computation of minibatch stochastic gradients of size $b$, the total runtime of a minibatch first-order method scales in direct proportion to $T$, but may grow much more slowly with $b$---and it may not grow at all if $b$ parallel computers are available. Specifically, if $M$ computing cores or devices are available to parallelize the minibatch computations, then any minibatch first-order algorithm's runtime would scale with
\begin{equation}
\textrm{Runtime} \,\propto\, T \times \ceil*{\frac{b}{M}} \times \textrm{Time to compute } \nabla \ell(w;z)
\end{equation}
For this reason, it is natural to ask to what extent we can reduce the runtime without hurting performance, i.e.~how much we can reduce the number of iterations, $T$, by increasing the minibatch size, $b$. To answer this question, it will be convenient to rewrite our guarantees in Theorems \ref{thm:alg-analysis} and \ref{thm:ambsgd-gc} by fixing the error, $\epsilon$, and asking how large $T$ must be in order to guarantee error $\epsilon$. Under Assumptions \ref{assumption:ell} and \ref{assumption:convex}, and Assumptions \ref{assumption:ell} and \ref{assumption:sc}, respectively, this is (ignoring constants)
\begin{align}
T(\epsilon) &= \sqrt{\frac{HB^2}{\epsilon}} + \frac{1}{b}\prn*{\frac{HB^2}{\epsilon} + \frac{HB^2L^*}{\epsilon^2}}\label{eq:our-method-speedup-cvx} \\
T(\epsilon) &= \sqrt{\frac{H}{\lambda}}\log\frac{\Delta}{\epsilon} +  \frac{1}{b}\prn*{\frac{H}{\lambda}\log\frac{\Delta}{\epsilon} + \frac{HL^*}{\lambda\epsilon}} \label{eq:our-method-speedup-sc}
\end{align}
Written this way, it is easy to see that the number of iterations needed by our algorithm to reach accuracy $\epsilon$ decreases linearly with the minibatch size $b$ up until the first term becomes larger than the second and third terms. Although it may not be practical to fully parallelize the computation of the minibatches across $b$ workers for large $b$, this nevertheless represents a substantial potential speedup with absolutely no cost to the algorithm's theoretical guarantees \citep{dekel2012optimal,cotter2011better}, which continues until 
\begin{equation}\label{eq:our-b-saturation}
b \geq \sqrt{\frac{HB^2}{\epsilon}} + \frac{\sqrt{HB^2}L^*}{\epsilon^{3/2}}\quad\textrm{and}\quad b \geq \sqrt{\frac{H}{\lambda}} + \frac{\sqrt{H}L^*}{\epsilon \sqrt{\lambda} \log\frac{\Delta}{\epsilon}}
\end{equation}
Once $b$ passes these thresholds, no more improvement is possible by increasing the minibatch size, and the iteration complexity is dominated by the first term, which corresponds to the time needed to reach $\epsilon$ error using exact accelerated gradient descent \citep{nesterov1983method}. In other words, once the minibatch size is this large, the algorithm performs essentially the same as if there were no noise in the gradients.

Our algorithms' speedup from minibatching improves significantly over previous results. As mentioned in the previous section, while SGD can attain the same error as our method using the same total number of samples, $n=bT$, it can only do so with $b=1$, and the number of iterations it requires to attain error $\epsilon$ under Assumptions \ref{assumption:ell} and \ref{assumption:convex} is (ignoring constants) \citep{cotter2011better}
\begin{equation}
T(\epsilon) = \frac{HB^2}{\epsilon} + \frac{1}{b}\frac{HB^2L^*}{\epsilon^2}
\end{equation}
Comparing this with our method \eqref{eq:our-method-speedup-cvx}, we see that SGD always requires at least as many iterations, and quadratically more for large $b$. In fact, in the case $L^*=0$, SGD sees no speedup at all from minibatching, whereas our method can be sped up substantially.

Improving over SGD, the minibatch accelerated SGD algorithm of \citet{cotter2011better} requires under Assumptions \ref{assumption:ell} and \ref{assumption:convex} (ignoring constants and log factors)
\begin{equation}
T(\epsilon) = \sqrt{\frac{HB^2}{\epsilon}} + \frac{1}{\sqrt{b}}\frac{HB^2}{\epsilon} + \frac{1}{b}\frac{HB^2L^*}{\epsilon^2}
\end{equation}
Therefore, their analysis exhibits three regimes rather than our two: first a linear $1/b$ speedup for
$b \leq {L^*}^2/\epsilon^2$,
then a $1/\sqrt{b}$ speedup for
\begin{equation}\label{eq:cotter-max-b}
\frac{{L^*}^2}{\epsilon^2} \leq b \leq \frac{HB^2}{\epsilon} + \frac{\sqrt{HB^2}L^*}{\epsilon^{3/2}}
\end{equation}
and finally no speedup for larger $b$. Therefore, even under the favorable condition that $L^*=0$, \citeauthor{cotter2011better}'s method has a sublinear parallelization speedup from minibatching, and consequently, their method can result in a substantially smaller speedup than ours for any particular minibatch size. The minibatch size they need to reach error $\epsilon$ in $\sqrt{HB^2/\epsilon}$ iterations---the optimal number for first-order methods, which corresponds to exact accelerated gradient descent---is the righthand side of \eqref{eq:cotter-max-b}, 
which can be larger by a factor of as much as $\sqrt{HB^2/\epsilon}$. 

Similarly, under Assumptions \ref{assumption:ell} and \ref{assumption:sc}, and in the special case of least squares problems, the algorithm and analysis of \citet{liu2018mass} exhibits a similar sublinear speedup from minibatching:
\begin{equation}
T(\epsilon) = \sqrt{\frac{H}{\lambda}}\log\frac{\Delta}{\epsilon} +  \frac{1}{\sqrt{b}}\frac{H}{\lambda}\log\frac{\Delta}{\epsilon} + \frac{1}{b}\frac{HL^*}{\lambda\epsilon}
\end{equation}
Like with \citeauthor{cotter2011better}'s analysis, compared with our method, which enjoys a linear speedup all the way up to the critical minibatch size, we see that \citeauthor{liu2018mass}'s algorithm has only a $1/\sqrt{b}$ speedup in some regimes, so it can require much larger minibatches to match accelerated gradient descent.

\section{Stochastic Optimization with Bounded Variance at the Optimum}\label{sec:sigma-star}

So far, we have considered optimizing objectives where the instantaneous losses are non-negative and the value of the minimum of the expected loss is bounded and small, but in other contexts we may want to understand the complexity of optimization in terms of bounds on the variance of the stochastic gradients.
In the optimization literature, it is common to assume that the variance of the stochastic gradients is bounded uniformly on the entire space, i.e.~$\sup_w \E \nrm{\nabla \ell(w;z)}^2 \leq \sigma^2$. When, in addition to this variance bound, the objective $L$ is $H$-smooth and convex, and has a minimizer with norm at most $B$, then it has long been known that $T$ steps of SGD achieves error \citep{nemirovskyyudin1983}
\begin{equation}
\E L(w_T) - L^* \leq \frac{HB^2}{T} + \frac{\sigma B}{\sqrt{T}}
\end{equation}
However, the assumption of uniformly upper bounded variance can be strong, and it turns out that when $\ell$ is also $H$-smooth, the $\sigma$ in SGD's guarantee can easily be replaced with $\sigma_*$, an upper bound on the standard deviation of the variance just at the minimizer specifically, i.e.~$\E \nrm{\nabla \ell(w^*;z)}^2 \leq \sigma_*^2$ \citep{moulines2011non,schmidt2013fast,needell2014stochastic,bottou2018optimization,gower2019sgd,stich2019unified}, i.e.~SGD guarantees
\begin{equation}\label{eq:unaccelerated-rate}
\E L(w_T) - L^* \leq \frac{HB^2}{T} + \frac{\sigma_* B}{\sqrt{T}}
\end{equation}
Indeed, for other \emph{non-accelerated} algorithms, the weaker bound $\sigma_*$ often suffices and a global variance bound is unnecessary \citep[e.g.][]{woodworth2020minibatch,koloskova2020unified}. 

However, it was not clear whether it is possible to make this substitution of $\sigma_*$ for $\sigma$ for \emph{accelerated} methods. For example, \citet{lan2012optimal}'s optimal stochastic first-order algorithm guarantees
\begin{equation}\label{eq:ac-sa-bound}
\E L(w_T) - L^* \leq \frac{HB^2}{T^2} + \frac{\sigma B}{\sqrt{T}}
\end{equation}
Can we replace this $\sigma$ with $\sigma_*$ too? This would represent a significant improvement. As discussed previously, we can expect $\sigma_*$ to be small---potentially even zero, and anyways often much smaller than $\sigma$---for problems of interest, including training machine learning models in the (near-) interpolation regime. More generally, it is often desirable, and generally much easier, to control the stochastic gradient variance at a single point versus globally. As an example, for ``heterogeneous'' distributed optimization---where different parallel workers have access to samples from different data distributions---it is common to bound a measure of the ``disagreement'' between these different data distributions specifically at the minimizer, which amounts to bounding the variance of the stochastic gradients at $w^*$ \citep[see, e.g., the discussion in][]{woodworth2020minibatch}. 

Unfortunately, a consequence of our lower bound, Theorem \ref{thm:lower-bound}, is that the $\sigma$ in the accelerated rate \eqref{eq:ac-sa-bound} \emph{cannot} generally be replaced by $\sigma_*$ in the same way as it can be for the unaccelerated rate \eqref{eq:unaccelerated-rate}. In fact, since Theorem \ref{thm:lower-bound} applies to \emph{any} learning rule that uses $n$ samples, this holds also for non-first-order methods too:
\begin{corollary}\label{cor:lb-all}
For $\ell(w;(x,y)) = \frac{1}{2}(\inner{w}{x}-y)^2$ the square loss, for any learning algorithm that uses $n$ i.i.d.~samples, there exists a distribution over $(x,y)$ such that $L$ has a minimizer with norm less than $B$, $\ell$ is $H$-smooth and convex, and $\E\nrm{\ell(w^*;z)}^2 \leq \sigma_*^2$, and for a universal constant $c$, the algorithm's output has error at least
\[
\E L(\hat{w}) - L^* \geq c\cdot\prn*{\frac{HB^2}{n} + \frac{\sigma_* B}{\sqrt{n}}}
\]
There is also a distribution over $(x,y)$ such that $L$ is satisfies $L(0) - L^* \leq \Delta$, $L$ is $\lambda$-strongly convex, and $\ell$ is $H$-smooth and convex, and for a universal constant $c$, the algorithm's output has error at least
\[
\E L(\hat{w}) - L^* \geq c\cdot\prn*{\Delta\cdot\indicator{n \leq \frac{H}{2\lambda}} + \min\crl*{\frac{\sigma_*^2}{\lambda n},\, \Delta}}
\]
\end{corollary}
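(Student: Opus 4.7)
The plan is to deduce Corollary \ref{cor:lb-all} directly from Theorem \ref{thm:lower-bound} by translating bounds expressed in terms of $L^*$ into bounds expressed in terms of $\sigma_*$. The key link is the inequality that for any $H$-smooth, convex, non-negative loss $\ell$, one has $\E\nrm{\nabla \ell(w^*;z)}^2 \leq 2HL^*$ (this is Lemma \ref{lem:lstar-bound}), so every distribution satisfying Assumptions \ref{assumption:ell} and \ref{assumption:convex} automatically has $\sigma_*^2 \leq 2HL^*$. Consequently a lower bound phrased in terms of $L^*$ can be re-expressed as a lower bound in terms of $\sigma_*$ whenever the hard instance already satisfies $\sigma_*^2 = \Theta(HL^*)$, i.e., whenever the inequality above is tight up to a constant.

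First, I would inspect the construction used in the proof of Theorem \ref{thm:lower-bound}. For the square loss $\ell(w;(x,y)) = \tfrac12(\inner{w}{x}-y)^2$ we have $\nabla \ell(w^*;z) = (\inner{w^*}{x} - y)\, x$, so
\[
\E\nrm{\nabla \ell(w^*;z)}^2 = \E\brk*{(\inner{w^*}{x}-y)^2 \nrm{x}^2}.
\]
In the natural hard instances behind Theorem \ref{thm:lower-bound}, the feature vectors $x$ are scaled to satisfy $\nrm{x}^2 = \Theta(H)$ (which is exactly what pins down the smoothness constant), while the residual $(\inner{w^*}{x}-y)^2$ produces the excess loss, giving $L^* = \tfrac12 \E (\inner{w^*}{x} - y)^2$. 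Together these yield $\sigma_*^2 = \Theta(HL^*)$, so the two parameters are interchangeable in the construction.

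Given a target variance bound $\sigma_*$, I would select the hard distribution from Theorem \ref{thm:lower-bound} with its $L^*$ parameter set so that $L^* = \Theta(\sigma_*^2/H)$. Substituting into the convex case of Theorem \ref{thm:lower-bound} turns $\sqrt{HB^2 L^*/n}$ into $\Theta(\sigma_* B/\sqrt{n})$, which combined with the deterministic floor $HB^2/n$ yields the first claim. For the strongly convex case, substituting into $\min\crl*{HL^*/(\lambda n),\,\Delta}$ gives $\min\crl*{\Theta(\sigma_*^2/(\lambda n)),\,\Delta}$, and combining with $\Delta \cdot \indicator{n \leq H/(2\lambda)}$ yields the second claim. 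One should also observe that Theorem \ref{thm:lower-bound} is stated for arbitrary learning rules that use $n$ i.i.d.\ samples, so the same quantifier carries over unchanged into Corollary \ref{cor:lb-all}.

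The only potential difficulty is verifying that the hard distributions from Theorem \ref{thm:lower-bound} can indeed be rescaled to realize arbitrary $\sigma_*$ throughout the nontrivial parameter regime while still respecting the remaining constraints ($H$-smoothness, $\nrm{w^*}\leq B$, $L(0)-L^*\leq \Delta$, $\lambda$-strong convexity). Because Theorem \ref{thm:lower-bound} already allows $L^*$ to vary over its entire natural range (up to $O(HB^2)$ in the convex case and up to $\Delta$ in the strongly convex case), the corresponding admissible range for $\sigma_*$ is $\sigma_* \lesssim HB$ and $\sigma_*^2 \lesssim H\Delta$ respectively, which is exactly the regime where the claimed lower bound is informative. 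Hence no fundamentally new ideas are needed beyond those in Theorem \ref{thm:lower-bound} itself.
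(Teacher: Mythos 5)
Your proposal is correct and follows essentially the same route as the paper: the hard instances in the proof of Theorem \ref{thm:lower-bound} satisfy $\E\nrm{\nabla\ell(w^*;z)}^2 \le 2HL^*$ (with equality up to constants, and exactly $0$ for the first-term constructions where $L^*=0$), so setting $L^* = \Theta(\sigma_*^2/H)$ converts each $L^*$-based bound into the corresponding $\sigma_*$-based bound. The paper makes precisely this observation at the end of each part of the proof of Theorem \ref{thm:lower-bound}, invoking Lemma \ref{lem:lstar-bound} just as you do.
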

As in Section \ref{sec:optimality}, since a single stochastic gradient estimate $\nabla \ell(w;z)$ can be computed with one sample, this lower bound also applies to minibatch first-order algorithms with $n=bT$, and the lower bound \eqref{eq:deterministic-first-order-lower-bounds} for deterministic first-order optimization still holds so we also have
\begin{corollary}\label{cor:lb-first-order}
For any algorithm that uses $T$ minibatch stochastic gradients of size $b$, there exists an objective $L(w) = \E_z \ell(w;z)$ where $L$ has a minimizer with norm less than $B$, $\ell$ is $H$-smooth and convex, and $\E\nrm{\ell(w^*;z)}^2 \leq \sigma_*^2$, so that for a universal constant $c$, the algorithm's output has error at least
\[
\E L(\hat{w}) - L^* \geq c\cdot\prn*{\frac{HB^2}{T^2} + \frac{HB^2}{bT} + \frac{\sigma_* B}{\sqrt{bT}}}
\]
There is also an objective that satisfies $L(0) - L^* \leq \Delta$, $L$ is $\lambda$-strongly convex, and $\ell$ is $H$-smooth and convex, so that for universal constants $c,c'$, the algorithm's output has error at least
\[
\E L(\hat{w}) - L^* \geq c\cdot\prn*{\Delta\exp\prn*{-\frac{\smash{c'}\sqrt{\lambda}T}{\sqrt{H}}} + \Delta\cdot\indicator{bT \leq \frac{H}{2\lambda}} + \min\crl*{\frac{\sigma_*^2}{\lambda bT},\, \Delta}}
\]
\end{corollary}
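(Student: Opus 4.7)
The plan is to assemble the bound from two existing ingredients, mirroring the derivation of Corollary \ref{cor:lstar-lower-bound} from Theorem \ref{thm:lower-bound} and \eqref{eq:deterministic-first-order-lower-bounds}. The first ingredient is the classical noiseless first-order lower bound \eqref{eq:deterministic-first-order-lower-bounds}: on a suitable $H$-smooth convex (resp.\ $\lambda$-strongly convex) objective with minimizer norm at most $B$, any algorithm making $T$ sequential gradient queries---and in particular any minibatch first-order algorithm, regardless of $b$, since in the deterministic setting the minibatch average equals the true gradient---incurs error $\Omega(HB^2/T^2)$ (resp.\ $\Omega(\Delta \exp(-c'\sqrt{\lambda}T/\sqrt{H}))$). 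Taking the loss to be deterministic, $\ell(w;z) = L(w)$, gives $\E\nrm{\nabla \ell(w^*;z)}^2 = 0 \leq \sigma_*^2$ for any $\sigma_* \geq 0$, so this hard instance is a valid member of the problem class appearing in the corollary.

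The second ingredient is Corollary \ref{cor:lb-all} with sample budget $n = bT$. Any minibatch first-order algorithm with $T$ iterations and batch size $b$ draws exactly $bT$ i.i.d.\ samples from $\mc{D}$, since each stochastic gradient $\nabla \ell(w;z_t^i)$ is computed from a single fresh sample; thus such an algorithm is a special case of a learning rule that takes $n = bT$ samples as input, and Corollary \ref{cor:lb-all} applies verbatim. Instantiating it yields the lower bound $\Omega\prn*{HB^2/(bT) + \sigma_* B/\sqrt{bT}}$ in the convex setting and $\Omega\prn*{\Delta\indicator{bT \leq H/(2\lambda)} + \min\crl*{\sigma_*^2/(\lambda bT),\, \Delta}}$ in the strongly convex setting.

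To combine these into the claimed additive bound I use that the worst-case error of a fixed algorithm over the whole problem class is at least the maximum of its worst-case errors over two subclasses (even if the two maxima are realized on different hard instances), and that the maximum of two nonnegative quantities is at least half their sum; any lost constant is absorbed into $c$. The only place requiring care---and the main obstacle, such as it is---is checking that both hard instances lie in the single problem class of the corollary (convex or strongly convex, $H$-smooth, with $\nrm{w^*}\leq B$, and $\E\nrm{\nabla\ell(w^*;z)}^2\leq\sigma_*^2$); but the noiseless instance has zero variance at $w^*$ and the construction behind Corollary \ref{cor:lb-all} is explicitly tailored to meet the remaining conditions, so no new computation is required beyond this bookkeeping.
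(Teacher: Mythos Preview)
Your proposal is correct and follows essentially the same approach as the paper: the paper derives Corollary~\ref{cor:lb-first-order} exactly by combining the deterministic first-order lower bound~\eqref{eq:deterministic-first-order-lower-bounds} with Corollary~\ref{cor:lb-all} applied at $n=bT$, just as it did earlier to obtain Corollary~\ref{cor:lstar-lower-bound} from Theorem~\ref{thm:lower-bound}. Your additional remarks---that the noiseless hard instance trivially satisfies the $\sigma_*$ constraint, and that $\max\geq\tfrac12$(sum) converts the two separate lower bounds into the stated additive form---make explicit bookkeeping that the paper leaves implicit.
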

Ignoring again the small gap between $\exp(-\frac{\smash{c'}\lambda bT}{H})$ and $\indicator{bT \leq \frac{H}{2\lambda}}$ (see the discussion below Corollary \ref{cor:lstar-lower-bound}), this shows, in essence, that when $b=1$, it is impossible to achieve the accelerated optimization rates of $T^{-2}$ and $\exp(-\sqrt{\lambda}T/\sqrt{H})$ under the conditions of Corollary \ref{cor:lb-first-order}. Furthermore, when $b=1$, the guarantee of regular, unaccelerated SGD actually matches the lower bound, so there is no room for acceleration, \citeauthor{lan2012optimal}'s accelerated SGD algorithm relied crucially on the uniformly bounded variance, and the $\sigma$ in \eqref{eq:ac-sa-bound} cannot generally be replaced with $\sigma_*$. In fact, Corollary \ref{cor:lb-all} shows that no learning rule, even non-first-order methods, can ensure error $n^{-2}$ using just $n$ samples.

However, the good news is that our guarantees for \emph{minibatch} accelerated SGD also apply in this setting:
\begin{theorem}\label{thm:sigmastar}
Let $L(w) = \E_z \ell(w;z)$ have a minimizer with norm at most $B$, let $\ell$ be $H$-smooth and convex, and let $\E\nrm{\nabla \ell(w^*;z)}^2 \leq \sigma_*^2$. Then Algorithm \ref{alg:acc-mb-sgd} guarantees
\[
\E L(\wag_T) - L^* \leq c\cdot\prn*{\frac{HB^2}{T^2} + \frac{HB^2}{bT} + \frac{\sigma_* B}{\sqrt{bT}}}
\]
Let $L(w) = \E_z \ell(w;z)$ satisfy $L(w) - L^* \geq \frac{\lambda}{2}\min_{w^*\in\argmin_w L(w)}\nrm{w-w^*}^2$ for all $w$, let $L(0) - L^* \leq \Delta$, let $\ell$ be $H$-smooth and convex, and let $\E\nrm{\nabla \ell(w^*;z)}^2 \leq \sigma_*^2$. Then $\myreduction(Alg\, \ref{alg:acc-mb-sgd},e)$ guarantees
\[
\E L(\hat{w}) - L^* \leq c\cdot\prn*{\Delta\exp\prn*{-\frac{c'\sqrt{\lambda}T}{\sqrt{H}}} + \Delta\exp\prn*{-\frac{c'\lambda bT}{H}} + \frac{\sigma_*^2}{\lambda bT}}
\]
\end{theorem}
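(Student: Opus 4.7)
The plan is to prove Theorem \ref{thm:sigmastar} by a drop-in substitution into the proofs of Theorems \ref{thm:alg-analysis} and \ref{thm:ambsgd-gc}. The only place where the non-negativity of $\ell$ and the bound on $L^*$ enters the proof of Theorem \ref{thm:alg-analysis} is via Lemma \ref{lem:lstar-bound}, which uses the self-bounding property of non-negative smooth functions to conclude $\E\nrm{\nabla\ell(w^*;z) - \nabla L(w^*)}^2 \leq 2HL^*$. In the present setting we instead have the assumption $\E\nrm{\nabla\ell(w^*;z)}^2 \leq \sigma_*^2$ directly, and since $\nabla L(w^*)=0$ at a minimizer, this is exactly a variance bound at $w^*$ with $\sigma_*^2$ playing the role of $2HL^*$. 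Everywhere that $2HL^*$ appears in the proof of Theorem \ref{thm:alg-analysis} I would replace it by $\sigma_*^2$; in particular, the stepsize $\gamma = \min\crl*{\frac{1}{12H},\frac{b}{24H(T+1)},\sqrt{\frac{bB^2}{HL^* T^3}}}$ should be replaced by $\gamma = \min\crl*{\frac{1}{12H},\frac{b}{24H(T+1)},\sqrt{\frac{2bB^2}{\sigma_*^2 T^3}}}$, so that the three regimes in the bound remain properly balanced.

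The next step is to propagate the variance bound from $w^*$ to the iterates $\wmd_t$, which is needed because stochastic gradients are actually evaluated at $\wmd_t$, not $w^*$. This step uses only $H$-smoothness of $\ell$, not non-negativity: standard manipulations yield $\E\nrm{\nabla\ell(\wmd_t;z) - \nabla L(\wmd_t)}^2 \leq 4H^2\nrm{\wmd_t - w^*}^2 + 4\sigma_*^2$ (or analogous constants), which mirrors the bound used in the $L^*$ version. From here, the potential-function argument of Theorem \ref{thm:alg-analysis} carries through unchanged, since those manipulations combine the distance-to-optimum term with the acceleration/iterate-averaging telescoping in exactly the same way. The resulting bound's third term becomes $\sqrt{B^2\sigma_*^2/(bT)} = \sigma_* B/\sqrt{bT}$ up to constants, which gives the first assertion of the theorem.

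For the second assertion (the growth-condition case), I would invoke the reduction Theorem \ref{thm:my-reduction} with $\mc{A}$ equal to the modified Algorithm \ref{alg:acc-mb-sgd} and $\theta = e$, exactly as in the proof of Theorem \ref{thm:ambsgd-gc}. The growth condition lets us convert the suboptimality budget at each restart into a distance-to-optimum bound, $B_t^2 \leq 2\Delta_{t-1}/\lambda$; plugging into the convex guarantee just established and solving for the number of iterations $T_t$ needed to shrink the suboptimality by a factor of $e$ gives $T_t \lesssim \sqrt{H/\lambda} + H/(\lambda b) + \sigma_*^2/(\lambda b \Delta_{t-1})$. Summing over the $\lceil\log(\Delta/\epsilon)\rceil$ restarts, the first two contributions are additive across restarts (giving the two exponential terms), while the third is a geometric series dominated by its last term, yielding the $\sigma_*^2/(\lambda b T)$ floor. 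This mirrors the $HL^*/(\lambda bT)$ floor of Theorem \ref{thm:ambsgd-gc} under the substitution $2HL^* \rightsquigarrow \sigma_*^2$.

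The main obstacle I expect is bookkeeping rather than anything conceptual: I need to confirm that throughout the proof of Theorem \ref{thm:alg-analysis}, the only quantitative use of Assumption \ref{assumption:ell}'s non-negativity is through Lemma \ref{lem:lstar-bound}, and that the variance-to-distance inequality is never used in a form that requires an un-centered gradient bound on $\nabla\ell$. A careful pass through the telescoping inequality, verifying that each use of the variance bound appears in the form $\E\nrm{\nabla\ell(\wmd_t;z) - \nabla L(\wmd_t)}^2$ (rather than as a raw second moment of $\nabla\ell(\wmd_t;z)$), should resolve this. Once verified, the remainder is a mechanical re-derivation with the same constants, and no new ideas are needed for either the convex or the growth-condition part.
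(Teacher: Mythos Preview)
Your overall plan is correct and matches the paper exactly: the proof of Theorem~\ref{thm:alg-analysis} in Appendix~\ref{app:proof-of-theorem-1} is already written entirely in terms of $\sigma_*^2$, with Lemma~\ref{lem:lstar-bound} invoked only in the very last line to substitute $\sigma_*^2 = 2HL^*$, so the first assertion is literally already proved there; the second then follows by the same reduction computation as in Theorem~\ref{thm:ambsgd-gc}.

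One correction, though: the specific variance bound you write, $\E\nrm{\nabla\ell(\wmd_t;z)-\nabla L(\wmd_t)}^2 \lesssim H^2\nrm{\wmd_t - w^*}^2 + \sigma_*^2$, does \emph{not} mirror what the paper uses and would not yield the optimal $HB^2/(bT)$ middle term. In the convex case there is no way to make $\nrm{\wmd_t - w^*}^2$ decrease along the trajectory; the only available bound is $\nrm{\wmd_t - w^*}^2 \leq 4B^2$, and a constant variance contribution of order $H^2B^2/b$ produces an $HB^2/\sqrt{bT}$ term after balancing stepsizes, which is exactly the suboptimal \citet{cotter2011better} rate. The paper's Lemma~\ref{lem:variance-bound} avoids this by splitting through the intermediate point $\wag_t$ rather than $w^*$: the first hop gives $\nrm{\wmd_t - \wag_t}^2 \leq 4B^2/\beta_t^2$, which \emph{does} decay, and the second hop uses cocoercivity to produce $2H\,\E[L(\wag_t)-L^*]$, which is absorbed into the left-hand side of the recursion. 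Your ``careful pass'' will uncover this immediately since Lemma~\ref{lem:variance-bound} is already stated with $\sigma_*^2$; just be aware that the decomposition via $\wag_t$ is load-bearing for the $b$-dependence, not merely cosmetic.
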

The first part of the Theorem is demonstrated in the proof of Theorem \ref{thm:alg-analysis} in Appendix \ref{app:proof-of-theorem-1}, and the second part follows an essentially identical argument as in the proof of Theorem \ref{thm:ambsgd-gc}. 

Corollary \ref{cor:lb-first-order} showed that it is impossible to achieve error like $T^{-2}$ using first-order methods with $b=1$. However, Theorem \ref{thm:sigmastar} shows it \emph{is} possible to achieve error like $T^{-2}$ with parallel runtime $T$ using our minibatch accelerated SGD method with $b > 1$. In other words, while SGD with minibatches of size $b=1$ matches the lower bound in Corollary \ref{cor:lb-all} with $n=bT=T$, and therefore attains the smallest possible error using $n$ samples, our method is able to more quickly attain this same optimal error using $n=bT$ samples with $b \gg 1$. As discussed in Section \ref{sec:mb-speedup}, this means our algorithm's parallel runtime, $T$, can be much smaller than SGD's, with up to a quadratic improvement. Since the lower bound, Corollary \ref{cor:lb-first-order}, and upper bound, Thoerem \ref{thm:sigmastar}, match, this also tightly bounds the complexity of stochastic first-order optimization with a bound on $\sigma_*$.

\section{Conclusion}

We proposed and analyzed a minibatch accelerated SGD algorithm for optimizing objectives whose minimum value is near zero. We show that our method is simultaneously optimal with respect to the minibatch size, $b$, and the minimum of the loss, $L^*$, which improves over previous results including \citet{cotter2011better} and \citet{liu2018mass} which were optimal with respect to $L^*$ but not $b$, and \citet{lan2012optimal} which was optimal  with respect to $b$ but not $L^*$. In Section \ref{sec:mb-speedup}, we describe how our method's improvements over prior work, which takes the form of a better dependence on the minibatch size, $b$, translates into the potential for a substantial reduction in the runtime via parallelizing the computation of the minibatch stochastic gradients. Finally, we extend our results to the closely related setting where the $L^*$ bound is replaced by a bound on the variance of the stochastic gradients at the point $w^*$, specifically, and we tightly characterize the minimax optimal rates in this setting. Our algorithm and analysis is of particular interest in the context of training machine learning models in the ``interpolation''/``realizable''/``overparametrized'' setting, where there exist parameters that exactly or nearly minimize the training and/or population loss, i.e.~$L^*$ and $\sigma_*$ are small.

A shortcoming of our method is that its implementation, specifically setting the stepsizes, depends on potentially unknown quantities such as $L^*$ and $B$, and on the time horizon $T$. The dependence on $T$ is not a particularly serious problem because it is straightforward to convert our method to an anytime algorithm using the classic ``doubling trick'', although this is not very practical and it would be interesting to develop an anytime variant of Algorithm \ref{alg:acc-mb-sgd}. 

On the other hand, the requirement of knowing $L^*$ and $B$ to implement the algorithm is a trickier issue. This problem is not unique to our method, and most of the accelerated stochastic first-order methods that we are aware of, including the work of \citet{lan2012optimal} and \citet{cotter2011better}, use these parameters to choose stepsizes and momentum parameters. While these quantities are generally unknown, our algorithm only needs an upper bound on them, so for any known upper bound $\tilde{L}^* \geq L^*$ and $\tilde{B} \geq B$, our algorithm can be implemented using the estimates $\tilde{L}^*$ and $\tilde{B}$, with a corresponding degradation in the guarantee depending on how tight the upper bounds are. Furthermore, when applying stochastic first-order algorithms in practice, one typically sets stepsize parameters via cross-validation rather than according to the theoretical prescriptions, so this may not be a big issue in practice.

\subsubsection*{Acknowledgements}
We thank Ohad Shamir for several helpful discussions in the process of preparing this article, and also George Lan for a conversation about optimization with bounded $\sigma_*$. BW is supported by a Google Research PhD Fellowship, and this work was also supported by NSF-CCF/BSF award 1718970/2016741, and was done as part of the NSF-Simons Funded Collaboration on the Foundations of Deep Learning (\url{https://deepfoundations.ai/}).



\bibliographystyle{plainnat}
\bibliography{bibliography}

\begin{thebibliography}{40}
\providecommand{\natexlab}[1]{#1}
\providecommand{\url}[1]{\texttt{#1}}
\expandafter\ifx\csname urlstyle\endcsname\relax
  \providecommand{\doi}[1]{doi: #1}\else
  \providecommand{\doi}{doi: \begingroup \urlstyle{rm}\Url}\fi

\bibitem[Allen-Zhu and Hazan(2016)]{zeyuan2016optimal}
Zeyuan Allen-Zhu and Elad Hazan.
\newblock Optimal black-box reductions between optimization objectives.
\newblock In D.~Lee, M.~Sugiyama, U.~Luxburg, I.~Guyon, and R.~Garnett,
  editors, \emph{Advances in Neural Information Processing Systems}, volume~29.
  Curran Associates, Inc., 2016.
\newblock URL
  \url{https://proceedings.neurips.cc/paper/2016/file/1f50893f80d6830d62765ffad7721742-Paper.pdf}.

\bibitem[Allen-Zhu et~al.(2018)Allen-Zhu, Li, and Liang]{allen2018learning}
Zeyuan Allen-Zhu, Yuanzhi Li, and Yingyu Liang.
\newblock Learning and generalization in overparameterized neural networks,
  going beyond two layers.
\newblock \emph{arXiv preprint arXiv:1811.04918}, 2018.

\bibitem[Allen-Zhu et~al.(2019)Allen-Zhu, Li, and Song]{allen2019convergence}
Zeyuan Allen-Zhu, Yuanzhi Li, and Zhao Song.
\newblock A convergence theory for deep learning via over-parameterization.
\newblock In \emph{International Conference on Machine Learning}, pages
  242--252. PMLR, 2019.

\bibitem[Arora et~al.(2019)Arora, Du, Hu, Li, and Wang]{arora2019fine}
Sanjeev Arora, Simon Du, Wei Hu, Zhiyuan Li, and Ruosong Wang.
\newblock Fine-grained analysis of optimization and generalization for
  overparameterized two-layer neural networks.
\newblock In \emph{International Conference on Machine Learning}, pages
  322--332. PMLR, 2019.

\bibitem[Bassily et~al.(2018)Bassily, Belkin, and Ma]{bassily2018exponential}
Raef Bassily, Mikhail Belkin, and Siyuan Ma.
\newblock On exponential convergence of sgd in non-convex over-parametrized
  learning.
\newblock \emph{arXiv preprint arXiv:1811.02564}, 2018.

\bibitem[Belkin et~al.(2019)Belkin, Hsu, Ma, and Mandal]{belkin2019reconciling}
Mikhail Belkin, Daniel Hsu, Siyuan Ma, and Soumik Mandal.
\newblock Reconciling modern machine-learning practice and the classical
  bias--variance trade-off.
\newblock \emph{Proceedings of the National Academy of Sciences}, 116\penalty0
  (32):\penalty0 15849--15854, 2019.

\bibitem[Bolte et~al.(2017)Bolte, Nguyen, Peypouquet, and
  Suter]{bolte2017error}
J{\'e}r{\^o}me Bolte, Trong~Phong Nguyen, Juan Peypouquet, and Bruce~W Suter.
\newblock From error bounds to the complexity of first-order descent methods
  for convex functions.
\newblock \emph{Mathematical Programming}, 165\penalty0 (2):\penalty0 471--507,
  2017.

\bibitem[Bottou et~al.(2018)Bottou, Curtis, and
  Nocedal]{bottou2018optimization}
L{\'e}on Bottou, Frank~E Curtis, and Jorge Nocedal.
\newblock Optimization methods for large-scale machine learning.
\newblock \emph{Siam Review}, 60\penalty0 (2):\penalty0 223--311, 2018.

\bibitem[Chizat et~al.(2019)Chizat, Oyallon, and Bach]{chizat2019lazy}
L{\'e}na{\"\i}c Chizat, Edouard Oyallon, and Francis Bach.
\newblock On lazy training in differentiable programming.
\newblock \emph{Advances in Neural Information Processing Systems},
  32:\penalty0 2937--2947, 2019.

\bibitem[Cotter et~al.(2011)Cotter, Shamir, Srebro, and
  Sridharan]{cotter2011better}
Andrew Cotter, Ohad Shamir, Nati Srebro, and Karthik Sridharan.
\newblock Better mini-batch algorithms via accelerated gradient methods.
\newblock In J.~Shawe-Taylor, R.~S. Zemel, P.~L. Bartlett, F.~Pereira, and
  K.~Q. Weinberger, editors, \emph{Advances in Neural Information Processing
  Systems 24}, pages 1647--1655. Curran Associates, Inc., 2011.
\newblock URL
  \url{http://papers.nips.cc/paper/4432-better-mini-batch-algorithms-via-accelerated-gradient-methods.pdf}.

\bibitem[Dekel et~al.(2012)Dekel, Gilad-Bachrach, Shamir, and
  Xiao]{dekel2012optimal}
Ofer Dekel, Ran Gilad-Bachrach, Ohad Shamir, and Lin Xiao.
\newblock Optimal distributed online prediction using mini-batches.
\newblock \emph{Journal of Machine Learning Research}, 13\penalty0
  (Jan):\penalty0 165--202, 2012.

\bibitem[Drusvyatskiy and Lewis(2018)]{drusvyatskiy2018error}
Dmitriy Drusvyatskiy and Adrian~S Lewis.
\newblock Error bounds, quadratic growth, and linear convergence of proximal
  methods.
\newblock \emph{Mathematics of Operations Research}, 43\penalty0 (3):\penalty0
  919--948, 2018.

\bibitem[Ghadimi and Lan(2013)]{ghadimi2013optimal}
Saeed Ghadimi and Guanghui Lan.
\newblock Optimal stochastic approximation algorithms for strongly convex
  stochastic composite optimization, ii: shrinking procedures and optimal
  algorithms.
\newblock \emph{SIAM Journal on Optimization}, 23\penalty0 (4):\penalty0
  2061--2089, 2013.

\bibitem[Gower et~al.(2019)Gower, Loizou, Qian, Sailanbayev, Shulgin, and
  Richt{\'a}rik]{gower2019sgd}
Robert~Mansel Gower, Nicolas Loizou, Xun Qian, Alibek Sailanbayev, Egor
  Shulgin, and Peter Richt{\'a}rik.
\newblock Sgd: General analysis and improved rates.
\newblock In \emph{International Conference on Machine Learning}, pages
  5200--5209. PMLR, 2019.

\bibitem[Jacot et~al.(2018)Jacot, Gabriel, and Hongler]{jacot2018neural}
Arthur Jacot, Franck Gabriel, and Cl{\'e}ment Hongler.
\newblock Neural tangent kernel: convergence and generalization in neural
  networks.
\newblock In \emph{Proceedings of the 32nd International Conference on Neural
  Information Processing Systems}, pages 8580--8589, 2018.

\bibitem[Koloskova et~al.(2020)Koloskova, Loizou, Boreiri, Jaggi, and
  Stich]{koloskova2020unified}
Anastasia Koloskova, Nicolas Loizou, Sadra Boreiri, Martin Jaggi, and Sebastian
  Stich.
\newblock A unified theory of decentralized sgd with changing topology and
  local updates.
\newblock In \emph{International Conference on Machine Learning}, pages
  5381--5393. PMLR, 2020.

\bibitem[Lan(2012)]{lan2012optimal}
Guanghui Lan.
\newblock An optimal method for stochastic composite optimization.
\newblock \emph{Mathematical Programming}, 133\penalty0 (1-2):\penalty0
  365--397, 2012.
\newblock URL
  \url{https://pdfs.semanticscholar.org/1621/f05894ad5fd6a8fcb8827a8c7aca36c81775.pdf}.

\bibitem[Lin and Xiao(2014)]{lin2014adaptive}
Qihang Lin and Lin Xiao.
\newblock An adaptive accelerated proximal gradient method and its homotopy
  continuation for sparse optimization.
\newblock In \emph{International Conference on Machine Learning}, pages 73--81.
  PMLR, 2014.

\bibitem[Liu and Belkin(2018)]{liu2018mass}
Chaoyue Liu and Mikhail Belkin.
\newblock Mass: an accelerated stochastic method for over-parametrized
  learning.
\newblock \emph{arXiv preprint arXiv:1810.13395}, 2018.

\bibitem[Ma et~al.(2018)Ma, Bassily, and Belkin]{ma2018power}
Siyuan Ma, Raef Bassily, and Mikhail Belkin.
\newblock The power of interpolation: Understanding the effectiveness of sgd in
  modern over-parametrized learning.
\newblock In \emph{International Conference on Machine Learning}, pages
  3325--3334. PMLR, 2018.

\bibitem[Moulines and Bach(2011)]{moulines2011non}
Eric Moulines and Francis Bach.
\newblock Non-asymptotic analysis of stochastic approximation algorithms for
  machine learning.
\newblock \emph{Advances in neural information processing systems},
  24:\penalty0 451--459, 2011.

\bibitem[Necoara et~al.(2019)Necoara, Nesterov, and Glineur]{necoara2019linear}
Ion Necoara, Yu~Nesterov, and Francois Glineur.
\newblock Linear convergence of first order methods for non-strongly convex
  optimization.
\newblock \emph{Mathematical Programming}, 175\penalty0 (1):\penalty0 69--107,
  2019.

\bibitem[Needell et~al.(2014)Needell, Ward, and Srebro]{needell2014stochastic}
Deanna Needell, Rachel Ward, and Nati Srebro.
\newblock Stochastic gradient descent, weighted sampling, and the randomized
  kaczmarz algorithm.
\newblock \emph{Advances in neural information processing systems},
  27:\penalty0 1017--1025, 2014.

\bibitem[Nemirovskii and Nesterov(1985)]{nemirovskii1985optimal}
Arkaddii~S Nemirovskii and Yu~E Nesterov.
\newblock Optimal methods of smooth convex minimization.
\newblock \emph{USSR Computational Mathematics and Mathematical Physics},
  25\penalty0 (2):\penalty0 21--30, 1985.

\bibitem[Nemirovsky and Yudin(1983)]{nemirovskyyudin1983}
Arkadii~Semenovich Nemirovsky and David~Borisovich Yudin.
\newblock Problem complexity and method efficiency in optimization.
\newblock 1983.

\bibitem[Nesterov(2013)]{nesterov2013gradient}
Yu~Nesterov.
\newblock Gradient methods for minimizing composite functions.
\newblock \emph{Mathematical Programming}, 140\penalty0 (1):\penalty0 125--161,
  2013.

\bibitem[Nesterov(2004)]{nesterov2004introductory}
Yurii Nesterov.
\newblock Introductory lectures on convex optimization: a basic course.
\newblock 2004.

\bibitem[Nesterov(1983)]{nesterov1983method}
Yurii~E Nesterov.
\newblock A method for solving the convex programming problem with convergence
  rate o (1/k\^{} 2).
\newblock In \emph{Dokl. akad. nauk Sssr}, volume 269, pages 543--547, 1983.

\bibitem[Renegar and Grimmer(2021)]{renegar2021simple}
James Renegar and Benjamin Grimmer.
\newblock A simple nearly optimal restart scheme for speeding up first-order
  methods.
\newblock \emph{Foundations of Computational Mathematics}, pages 1--46, 2021.

\bibitem[Roulet and d'Aspremont(2020)]{roulet2020sharpness}
Vincent Roulet and Alexandre d'Aspremont.
\newblock Sharpness, restart, and acceleration.
\newblock \emph{SIAM Journal on Optimization}, 30\penalty0 (1):\penalty0
  262--289, 2020.

\bibitem[Schmidt and Roux(2013)]{schmidt2013fast}
Mark Schmidt and Nicolas~Le Roux.
\newblock Fast convergence of stochastic gradient descent under a strong growth
  condition.
\newblock \emph{arXiv preprint arXiv:1308.6370}, 2013.

\bibitem[Shalev-Shwartz and Singer(2007)]{shalev2007online}
Shai Shalev-Shwartz and Yoram Singer.
\newblock Online learning: Theory, algorithms, and applications.
\newblock 2007.

\bibitem[Srebro et~al.(2010)Srebro, Sridharan, and
  Tewari]{srebro2010optimistic}
Nathan Srebro, Karthik Sridharan, and Ambuj Tewari.
\newblock Optimistic rates for learning with a smooth loss.
\newblock \emph{arXiv preprint arXiv:1009.3896}, 2010.

\bibitem[Stich(2019)]{stich2019unified}
Sebastian~U Stich.
\newblock Unified optimal analysis of the (stochastic) gradient method.
\newblock \emph{arXiv preprint arXiv:1907.04232}, 2019.

\bibitem[Tsigler and Bartlett(2020)]{tsigler2020benign}
Alexander Tsigler and Peter~L Bartlett.
\newblock Benign overfitting in ridge regression.
\newblock \emph{arXiv preprint arXiv:2009.14286}, 2020.

\bibitem[Vaswani et~al.(2019)Vaswani, Bach, and Schmidt]{vaswani2019fast}
Sharan Vaswani, Francis Bach, and Mark Schmidt.
\newblock Fast and faster convergence of sgd for over-parameterized models and
  an accelerated perceptron.
\newblock In \emph{The 22nd International Conference on Artificial Intelligence
  and Statistics}, pages 1195--1204. PMLR, 2019.

\bibitem[Woodworth et~al.(2020)Woodworth, Patel, and
  Srebro]{woodworth2020minibatch}
Blake Woodworth, Kumar~Kshitij Patel, and Nathan Srebro.
\newblock Minibatch vs local sgd for heterogeneous distributed learning.
\newblock \emph{arXiv preprint arXiv:2006.04735}, 2020.

\bibitem[Zhang et~al.(2016)Zhang, Bengio, Hardt, Recht, and
  Vinyals]{zhang2016understanding}
Chiyuan Zhang, Samy Bengio, Moritz Hardt, Benjamin Recht, and Oriol Vinyals.
\newblock Understanding deep learning requires rethinking generalization.
\newblock \emph{arXiv preprint arXiv:1611.03530}, 2016.

\bibitem[Zhang and Zhou(2019)]{zhang2019stochastic}
Lijun Zhang and Zhi-Hua Zhou.
\newblock Stochastic approximation of smooth and strongly convex functions:
  Beyond the $ o (1/t) $ convergence rate.
\newblock In \emph{Conference on Learning Theory}, pages 3160--3179. PMLR,
  2019.

\bibitem[Zhang et~al.(2017)Zhang, Yang, and Jin]{zhang2017empirical}
Lijun Zhang, Tianbao Yang, and Rong Jin.
\newblock Empirical risk minimization for stochastic convex optimization:
  $o(1/n)$-and $o(1/n^2)$-type of risk bounds.
\newblock In \emph{Conference on Learning Theory}, pages 1954--1979. PMLR,
  2017.

\end{thebibliography}

\appendix

\section{Proof of Theorem \ref{thm:alg-analysis}} \label{app:proof-of-theorem-1}

\begin{lemma}[c.f.~Lemma 1 \citep{lan2012optimal}]\label{lem:constrained-update-optimality}
Let $w_{t+1}$, $w_t$, and $\wmd_t$ be updated as in Algorithm \ref{alg:acc-mb-sgd}. Then for any $w \in \crl{w:\nrm{w}\leq B}$
\begin{multline*}
\gamma_t\inner{g_t(\wmd_t)}{w_{t+1} - \wmd_t} \\
\leq \gamma_t\inner{g_t(\wmd_t)}{w - \wmd_t} + \frac{1}{2}\nrm{w - w_t}^2 - \frac{1}{2}\nrm{w - w_{t+1}}^2 - \frac{1}{2}\nrm{w_{t+1} - w_t}^2
\end{multline*}
\end{lemma}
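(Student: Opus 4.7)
The plan is to exploit the fact that the update $w_{t+1}$ in Algorithm \ref{alg:acc-mb-sgd} is precisely the Euclidean projection of the unconstrained step $\tilde{w}_{t+1} = w_t - \gamma_t g_t(\wmd_t)$ onto the ball $\mathcal{B} = \{w : \|w\| \leq B\}$. Indeed, the operation $w_{t+1} = \min\{1, B/\|\tilde{w}_{t+1}\|\}\tilde{w}_{t+1}$ leaves $\tilde{w}_{t+1}$ unchanged if it already lies in $\mathcal{B}$, and otherwise rescales it to the boundary, which is exactly the Euclidean projection onto $\mathcal{B}$. Equivalently, $w_{t+1}$ solves the constrained problem
\[
w_{t+1} = \argmin_{u \in \mathcal{B}} \Bigl\{ \gamma_t \inner{g_t(\wmd_t)}{u} + \tfrac{1}{2}\nrm{u - w_t}^2 \Bigr\}.
\]

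Next, I would apply the first-order optimality condition for this constrained (convex) minimization: since the objective is differentiable, for every $w \in \mathcal{B}$,
\[
\inner{\gamma_t g_t(\wmd_t) + (w_{t+1} - w_t)}{w - w_{t+1}} \geq 0,
\]
which, after rearranging, yields $\gamma_t \inner{g_t(\wmd_t)}{w_{t+1} - w} \leq \inner{w_t - w_{t+1}}{w_{t+1} - w}$.

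The next step is the standard three-point identity $\inner{a-b}{b-c} = \tfrac{1}{2}\nrm{a-c}^2 - \tfrac{1}{2}\nrm{a-b}^2 - \tfrac{1}{2}\nrm{b-c}^2$, applied with $a = w_t$, $b = w_{t+1}$, $c = w$, giving
\[
\inner{w_t - w_{t+1}}{w_{t+1} - w} = \tfrac{1}{2}\nrm{w - w_t}^2 - \tfrac{1}{2}\nrm{w - w_{t+1}}^2 - \tfrac{1}{2}\nrm{w_{t+1} - w_t}^2.
\]
Finally, I decompose $w_{t+1} - \wmd_t = (w_{t+1} - w) + (w - \wmd_t)$ inside the inner product on the left-hand side of the claimed inequality, bound the first piece using the inequality above, and leave the second piece as-is, producing exactly the desired bound.

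None of the steps is really an obstacle; this is a routine projection/optimality lemma of the type used throughout the AC-SA analysis. The only subtle point is verifying that the rescaling step really is the Euclidean projection onto the $B$-ball (so that the optimality inequality is valid for every comparator $w$ with $\|w\|\leq B$, not just for $\wmd_t$ or $w^*$), and that the comparator set $\{w : \|w\|\leq B\}$ is a convex set containing $w^*$ by Assumption \ref{assumption:convex}, which is what later allows the lemma to be instantiated at $w = w^*$ in the main convergence proof.
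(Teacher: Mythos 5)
Your proof is correct and follows essentially the same route as the paper: both characterize $w_{t+1}$ as the minimizer of $\gamma_t\inner{g_t(\wmd_t)}{u} + \tfrac12\nrm{u-w_t}^2$ over the $B$-ball, then apply the first-order optimality condition and the three-point identity. The only cosmetic difference is that you justify the variational characterization by completing the square and identifying the rescaling step as the Euclidean projection, whereas the paper verifies it by exhibiting an explicit KKT multiplier; both are valid and your version is, if anything, slightly more direct.
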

\begin{proof}
First, we show that 
\begin{equation}
w_{t+1} = \argmin_{w:\nrm{w}\leq B}\gamma_t\inner{g_t(\wmd_t)}{w - \wmd_t} + \frac{1}{2}\nrm*{w - w_t}^2
\end{equation}
Let $\hat{w}$ be this $\argmin$, which is unique since the objective is strongly convex. The KKT optimality conditions for $\hat{w}$ are that there exists $\lambda$ such that
\begin{align}
\nrm{\hat{w}} &\leq B \\ 
\lambda &\geq 0 \\
\lambda\prn*{\nrm{\hat{w}} - B} &= 0 \\
\gamma_t g_t(\wmd_t) + \hat{w} - w_t + \lambda \hat{w} & = 0 \iff \hat{w} = \frac{w_t - \gamma_tg_t(\wmd_t)}{1 + \lambda}
\end{align}

Let 
\begin{equation}
\lambda = \frac{1}{\min\crl*{1,\, \frac{B}{\nrm{\tilde{w}_{t+1}}}}} - 1
\end{equation}
We will now show that $w_{t+1}$ and this $\lambda$ satisfy these KKT conditions. Since $w_{t+1} = \min\crl*{1,\, \frac{B}{\nrm{\tilde{w}_{t+1}}}} \tilde{w}_{t+1}$, we have primal feasibility $\nrm{w_{t+1}} \leq B$. Also, because $\frac{1}{\min\crl*{1,\, \frac{B}{\nrm{\tilde{w}_{t+1}}}}} \geq \frac{1}{1}$,  we have dual feasibility $\lambda \geq 0$. Next, if $\nrm{w_{t+1}} < B$, then it must be the case that $\min\crl*{1,\, \frac{B}{\nrm{\tilde{w}_{t+1}}}} = 1$, which implies $\lambda = 0$, which establishes the complementary slackness condition. Finally, we have stationarity because
\begin{equation}
w_{t+1} 
= \min\crl*{1,\, \frac{B}{\nrm{\tilde{w}_{t+1}}}} \tilde{w}_{t+1}
= \frac{w_t - \gamma_t g_t(\wmd_t)}{1 + \lambda} 
\end{equation}

From here, we let $p(w) = \gamma_t \inner{g_t(\wmd_t)}{w - \wmd_t}$, so $w_{t+1} = \argmin_{w:\nrm{w}\leq B} p(w) + \frac{1}{2}\nrm{w - w_t}^2$. The first-order optimality condition for $w_{t+1}$ is that for all $w \in \crl{w:\nrm{w}\leq B}$,
\begin{equation}
\inner{\nabla p(w_{t+1}) + w_{t+1} - w_t}{w - w_{t+1}} \geq 0
\end{equation}
This, combined with the convexity of $p$ implies
\begin{align}
&p(w) + \frac{1}{2}\nrm{w - w_t}^2 \nonumber\\
&= p(w) + \frac{1}{2}\nrm{w_{t+1} - w_t}^2 + \frac{1}{2}\nrm{w - w_{t+1}}^2 + \inner{w_{t+1} - w_t}{w - w_{t+1}} \\
&\geq p(w_{t+1}) + \frac{1}{2}\nrm{w_{t+1} - w_t}^2 + \frac{1}{2}\nrm{w - w_{t+1}}^2 + \inner{\nabla p(w_{t+1}) + w_{t+1} - w_t}{w - w_{t+1}} \\
&\geq p(w_{t+1}) + \frac{1}{2}\nrm{w_{t+1} - w_t}^2 + \frac{1}{2}\nrm{w - w_{t+1}}^2
\end{align}
Substituting the definition of $p$ and rearranging completes the proof.
\end{proof}


\begin{lemma}\label{lem:variance-bound}
Let $\ell(\cdot;z)$ be $H$-smooth, convex, and non-negative for each $z$, let the stochastic gradient variance at $w^*$ be bounded $\E\nrm{\nabla \ell(w^*;z) - \nabla L(w^*)}^2 \leq \sigma_*^2$, and let $g(\wmd_t) = \frac{1}{b}\sum_{i=1}^b \nabla \ell(\wmd_t;z_i)$ be a minibatch stochastic gradient of size $b$. Then
\[
\E \nrm*{g(\wmd_t) - \nabla L(\wmd_t)}^2 \leq \frac{8H^2B^2}{b\beta_t^2} + \frac{8H}{b}\E\brk*{L(\wag_t) - L^*} + \frac{4\sigma_*^2}{b}
\]
\end{lemma}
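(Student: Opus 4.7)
The plan is to express the minibatch variance as $1/b$ times the per-sample variance and then bound the per-sample variance by splitting it at two intermediate points: first $\wag_t$, then $w^*$. The first split will produce the $\beta_t^{-2}$ factor (which a single split at $w^*$ cannot deliver), and the second split will transfer the residual noise to the one point, $w^*$, where we can actually bound it by $\sigma_*^2$.

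Concretely, because the $z_i$ are i.i.d., the minibatch estimator satisfies $\E\nrm{g(\wmd_t) - \nabla L(\wmd_t)}^2 = \tfrac{1}{b}\E\nrm{\nabla \ell(\wmd_t;z) - \nabla L(\wmd_t)}^2$, so it suffices to bound the per-sample quantity by $8H^2B^2/\beta_t^2 + 8H\E[L(\wag_t)-L^*] + 4\sigma_*^2$. I would write
\[
\nabla \ell(\wmd_t;z) - \nabla L(\wmd_t) = \underbrace{\bigl(\nabla \ell(\wmd_t;z) - \nabla \ell(\wag_t;z)\bigr) - \bigl(\nabla L(\wmd_t) - \nabla L(\wag_t)\bigr)}_{=:A_z} + \underbrace{\nabla \ell(\wag_t;z) - \nabla L(\wag_t)}_{=:B_z},
\]
apply $\nrm{A_z+B_z}^2 \leq 2\nrm{A_z}^2 + 2\nrm{B_z}^2$, and take expectation. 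For $A_z$, the variance-vs-moment inequality $\E\nrm{X-\E X}^2 \leq \E\nrm{X}^2$ together with the $H$-Lipschitzness of $\nabla\ell(\cdot;z)$ gives
\[
\E\nrm{A_z}^2 \leq H^2\nrm{\wmd_t - \wag_t}^2 = H^2\beta_t^{-2}\nrm{w_t - \wag_t}^2 \leq 4H^2B^2\beta_t^{-2},
\]
using $\wmd_t - \wag_t = \beta_t^{-1}(w_t - \wag_t)$ and $\nrm{w_t},\nrm{\wag_t}\leq B$ (since $\wag_t$ is a convex combination of iterates in the ball). For $B_z$, perform an analogous split at $w^*$: write $B_z = (\nabla\ell(\wag_t;z) - \nabla\ell(w^*;z) - \nabla L(\wag_t)) + (\nabla\ell(w^*;z)-\nabla L(w^*))$, again apply Young's inequality, bound the second piece by $\sigma_*^2$, and bound the first piece via the same variance-vs-moment step together with the standard cocoercivity/self-bounding identity for $H$-smooth convex $\ell(\cdot;z)$:
\[
\nrm{\nabla\ell(\wag_t;z) - \nabla\ell(w^*;z)}^2 \leq 2H\bigl[\ell(\wag_t;z) - \ell(w^*;z) - \inner{\nabla\ell(w^*;z)}{\wag_t - w^*}\bigr].
\]
Taking expectation, the linear term vanishes since $\E\nabla\ell(w^*;z) = \nabla L(w^*) = 0$, leaving $2H\,\E[L(\wag_t)-L^*]$. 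Assembling the two nested applications of Young's inequality gives $\E\nrm{\nabla\ell(\wmd_t;z) - \nabla L(\wmd_t)}^2 \leq 8H^2B^2\beta_t^{-2} + 8H\,\E[L(\wag_t)-L^*] + 4\sigma_*^2$, and dividing by $b$ finishes the proof.

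There is no real obstacle — the only design choice that matters is using $\wag_t$ (and not $w^*$) as the first intermediate. Splitting directly at $w^*$ would yield only a $\beta_t^{-1}$ factor via $\nrm{\wmd_t - w^*}^2 \leq 4B^2$, which is too weak; routing through $\wag_t$ forces the movement scale to be $\nrm{\wmd_t-\wag_t} = \beta_t^{-1}\nrm{w_t-\wag_t}$ and turns the bounded domain factor into a $\beta_t^{-2}$ in the final bound. The non-negativity of $\ell$ is not needed for this particular lemma, only convexity and smoothness.
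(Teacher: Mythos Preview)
Your proposal is correct and follows essentially the same route as the paper: reduce to the per-sample variance, split first at $\wag_t$ to extract the $\beta_t^{-2}$ factor from $\wmd_t-\wag_t=\beta_t^{-1}(w_t-\wag_t)$, then split at $w^*$ and use cocoercivity plus the $\sigma_*^2$ bound. The only cosmetic difference is that the paper drops the centering term $\nabla L(\wmd_t)$ up front via $\E\nrm{X-\E X}^2\leq\E\nrm{X}^2$ and then works with uncentered gradients, whereas you carry the centering through and apply the variance-vs-moment step at each split; the two yield identical constants and structure.
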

\begin{proof}
By the independence of the stochastic gradients $\nabla \ell(\wmd_t,z_i)$ and the inequality $\nrm{a+b}^2 \leq 2\nrm{a}^2 + 2\nrm{b}^2$, we can upper bound
\begin{align}
\E&\nrm{g(\wmd_t) - \nabla L(\wmd_t)}^2 \nonumber\\
&= \E\nrm*{\frac{1}{b}\sum_{i=1}^b \nabla \ell(\wmd_t;z_i) - \nabla L(\wmd_t)}^2 \\
&= \frac{1}{b^2}\sum_{i=1}^b \E\nrm{\nabla \ell(\wmd_t;z_i) - \nabla L(\wmd_t)}^2 \\
&\leq \frac{1}{b}\E\nrm{\nabla \ell(\wmd_t;z_1)}^2 \\
&\leq \frac{2}{b}\E\nrm{\nabla \ell(\wmd_t;z_1) - \nabla \ell(\wag_t;z_1)}^2 + \frac{2}{b}\E\nrm{\nabla \ell(\wag_t;z_1)}^2 \\
&\leq \frac{2H^2}{b}\E\nrm{\wmd_t - \wag_t}^2 + \frac{4}{b}\E\nrm{\nabla \ell(\wag_t;z_1) - \nabla \ell(w^*;z_1)}^2 + \frac{4}{b}\E\nrm{\nabla \ell(w^*;z_1)}^2\label{eq:variance-bound-lemma-eq1}
\end{align}
For the final inequality, we used that $\ell(\cdot;z)$ is $H$-smooth, so $\nabla \ell(\cdot;z)$ is $H$-Lipschitz.

For the first term on the right hand side, we note that due to the algorithm's projections, all of the iterates $\wmd_t$, $\wag_t$, and $w_t$ lie within the set $\crl{w:\nrm{w}\leq B}$. Therefore,
\begin{equation}
\wmd_t = \beta_t^{-1}w_t + (1-\beta_t^{-1})\wag_t \implies \nrm{\wmd_t - \wag_t} = \beta_t^{-1}\nrm{w_t - \wag_t} \leq 2B\beta_t^{-1}
\end{equation}
For the second term, we apply \citep[Theorem 2.1.5][]{nesterov2004introductory}:
\begin{align}
\E&\nrm{\nabla \ell(\wag_t;z_1) - \nabla \ell(w^*;z_1)}^2 \nonumber\\
&\leq 2H\E\brk*{\ell(\wag_t;z_1) - \ell(w^*;z_1) - \inner{\nabla \ell(w^*;z_1)}{\wag_t - w^*}} \\
&= 2H\E\brk*{L(\wag_t) - L^*}
\end{align}
For the third term, we use the variance bound at $w^*$:
\begin{equation}
\E\nrm{\nabla \ell(w^*;z_1)}^2
= \E\nrm{\nabla \ell(w^*;z_1) - \nabla L(w^*)}^2
\leq \sigma_*^2
\end{equation}
Combining these with \eqref{eq:variance-bound-lemma-eq1} completes the proof.
\end{proof}

\begin{lemma}\label{lem:lstar-bound}
Let $\ell(\cdot;z)$ be $H$-smooth and non-negative for all $z$ and let $L^* = \min_w L(w)$. Then
\[
\E\nrm{\nabla \ell(w^*;z)}^2 = \E\nrm{\nabla \ell(w^*;z) - \nabla L(w^*)}^2 \leq 2HL^*
\]
\end{lemma}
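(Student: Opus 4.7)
The plan is short and has two ingredients. First, since $w^*$ minimizes $L$, first-order optimality gives $\nabla L(w^*) = \E_z[\nabla \ell(w^*;z)] = 0$, so $\E\nrm{\nabla \ell(w^*;z) - \nabla L(w^*)}^2 = \E\nrm{\nabla \ell(w^*;z)}^2$, which establishes the claimed equality.

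Second, for the inequality I would invoke the standard self-bounding property of non-negative $H$-smooth functions applied pointwise in $z$. Concretely, for each fixed $z$, the function $f(w) := \ell(w;z)$ is $H$-smooth, so the smoothness upper bound at the point $w^* - \tfrac{1}{H}\nabla f(w^*)$ gives
\[
0 \leq f\!\left(w^* - \tfrac{1}{H}\nabla f(w^*)\right) \leq f(w^*) - \tfrac{1}{2H}\nrm{\nabla f(w^*)}^2,
\]
where the leftmost inequality uses non-negativity of $\ell(\cdot;z)$. Rearranging yields the pointwise bound $\nrm{\nabla \ell(w^*;z)}^2 \leq 2H\,\ell(w^*;z)$.

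Finally, I would take expectation over $z \sim \mc{D}$ on both sides and use $\E_z[\ell(w^*;z)] = L(w^*) = L^*$ to conclude
\[
\E\nrm{\nabla \ell(w^*;z)}^2 \leq 2H\,\E[\ell(w^*;z)] = 2HL^*.
\]
There is no real obstacle here; the only subtlety worth flagging is that the self-bounding step requires non-negativity of $\ell(\cdot;z)$ almost surely (guaranteed by Assumption \ref{assumption:ell}), not merely non-negativity of $L$, since it is applied sample-wise before taking the expectation.
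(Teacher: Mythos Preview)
Your proposal is correct and follows the same overall strategy as the paper: establish $\nabla L(w^*)=0$ for the equality, derive the pointwise self-bounding inequality $\nrm{\nabla \ell(w^*;z)}^2 \le 2H\,\ell(w^*;z)$ for each fixed $z$, and then take expectation.

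The only difference is in how the pointwise bound is obtained. The paper introduces a per-sample minimizer $w_z^*\in\argmin_w \ell(w;z)$ and invokes the co-coercivity inequality (Nesterov, Theorem~2.1.5) between $w^*$ and $w_z^*$, then uses non-negativity to drop $\ell(w_z^*;z)$. You instead plug the point $w^*-\tfrac{1}{H}\nabla \ell(w^*;z)$ directly into the quadratic upper bound from smoothness and use non-negativity there. Your route is slightly more elementary: it avoids assuming the per-sample minimizer $w_z^*$ exists, and it uses only $H$-smoothness and non-negativity, exactly the hypotheses stated in the lemma (the paper's cited inequality formally requires convexity of $\ell(\cdot;z)$ as well, though in this special case the conclusion holds without it). Either argument is perfectly fine here.
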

\begin{proof}
This follows almost immediately from \citep[Theorem 2.1.5][]{nesterov2004introductory}. For each $z$, let $w_z^* \in \argmin_w \ell(w;z)$, then
\begin{align}
\E\nrm{\nabla \ell(w^*;z)}^2
&= \E\nrm{\nabla \ell(w^*;z) - \nabla \ell(w^*_z;z)}^2 \\
&\leq 2H\E\brk*{\ell(w^*;z) - \ell(w^*_z;z) - \inner{\nabla \ell(w^*_z;z)}{w^* - w^*_z}} \\
&= 2HL^* - 2H\E\ell(w^*_z;z) \\
&\leq 2HL^*
\end{align}
For the final inequality, we used that $\ell$ is non-negative.
\end{proof}

\alganalysis*
\begin{proof}
This proof is based on similar ideas as the proof of Lemma 5 and Theorem 2 due to \citet{lan2012optimal}. The key difference is that \citeauthor{lan2012optimal} considers a setting in which the variance of the stochastic gradients are uniformly bounded, while in our setting, we do not directly assume any bound on this quantity.
 
Let $d_t = w_{t+1} - w_t$, it can be easily seen that 
\begin{equation}
\wag_{t+1} - \wmd_t = \beta_t^{-1}w_{t+1} + (1-\beta_t^{-1})\wag_t - \wmd_t = \beta_t^{-1}d_t
\end{equation}
The above observation, along with the $H$-smoothness of $L$ implies
\begin{align}
\beta_t\gamma_tL(\wag_{t+1}) 
&\leq \beta_t\gamma_t\brk*{L(\wmd_t) + \inner{\nabla L(\wmd_t)}{\wag_{t+1} - \wmd_t} + \frac{H}{2}\nrm*{\wag_{t+1} - \wmd_t}^2} \\
&= \beta_t\gamma_t\brk*{L(\wmd_t) + \inner{\nabla L(\wmd_t)}{\wag_{t+1} - \wmd_t}} + \frac{H\gamma_t}{2\beta_t}\nrm*{d_t}^2 \label{eq:lem5-eq1}
\end{align}
Using the convexity of $L$, we can upper bound:
\begin{align}
\beta_t\gamma_t&\brk*{L(\wmd_t) + \inner{\nabla L(\wmd_t)}{\wag_{t+1} - \wmd_t}} \nonumber\\
&= \beta_t\gamma_t\brk*{L(\wmd_t) + \inner{\nabla L(\wmd_t)}{\beta_t^{-1}w_{t+1} + (1-\beta_t^{-1})\wag_t - \wmd_t}} \\
&= (\beta_t - 1)\gamma_t\brk*{L(\wmd_t) + \inner{\nabla L(\wmd_t)}{\wag_t - \wmd_t}} \nonumber\\
&\qquad\qquad\qquad+ \gamma_t\brk*{L(\wmd_t) + \inner{\nabla L(\wmd_t)}{w_{t+1} - \wmd_t}} \\
&\leq (\beta_t - 1)\gamma_t L(\wag_t) + \gamma_t\brk*{L(\wmd_t) + \inner{g_t(\wmd_t)}{w_{t+1} - \wmd_t}} - \gamma_t\inner{\delta_t}{w_{t+1} - \wmd_t}\label{eq:lem5-eq2}
\end{align}
where $\delta_t := g_t(\wmd_t) - \nabla L(\wmd_t)$. We now apply Lemma \ref{lem:constrained-update-optimality} to conclude that for any $w \in \crl*{w:\nrm{w}\leq B}$
\begin{multline}
\gamma_t\inner{g_t(\wmd_t)}{w_{t+1} - \wmd_t} \\
\leq \gamma_t\inner{g_t(\wmd_t)}{w - \wmd_t} + \frac{1}{2}\nrm{w - w_t}^2 - \frac{1}{2}\nrm{w - w_{t+1}}^2 - \frac{1}{2}\nrm{w_{t+1} - w_t}^2
\end{multline}
Because there exists a minimizer of $L$ with norm at most $B$, we can apply this with $w = w^* \in \argmin_{w:\nrm{w}\leq B} L(w)$. This, plus the convexity of $L$ allows us to upper bound the second term in \eqref{eq:lem5-eq2} as
\begin{align}
\gamma_t &L(\wmd_t) + \gamma_t\inner{g_t(\wmd_t)}{w_{t+1} - \wmd_t}\nonumber\\
&= \gamma_t L(\wmd_t) + \gamma_t\inner{g_t(\wmd_t)}{w^* - \wmd_t} \nonumber\\
&\qquad\qquad\qquad+ \frac{1}{2}\nrm{w^* - w_t}^2 - \frac{1}{2}\nrm{w^* - w_{t+1}}^2 - \frac{1}{2}\nrm{w_{t+1} - w_t}^2 \\
&= \gamma_t L(\wmd_t) + \gamma_t\inner{\nabla L(\wmd_t)}{w^* - \wmd_t} + \gamma_t\inner{\delta_t}{w^* - \wmd_t} \nonumber\\
&\qquad\qquad\qquad+ \frac{1}{2}\nrm{w^* - w_t}^2 - \frac{1}{2}\nrm{w^* - w_{t+1}}^2 - \frac{1}{2}\nrm{w_{t+1} - w_t}^2 \\
&\leq \gamma_t L^* + \gamma_t\inner{\delta_t}{w^* - \wmd_t} + \frac{1}{2}\nrm{w^* - w_t}^2 - \frac{1}{2}\nrm{w^* - w_{t+1}}^2 - \frac{1}{2}\nrm{w_{t+1} - w_t}^2
\end{align}
Therefore, returning to \eqref{eq:lem5-eq2}, we conclude that
\begin{multline}
\beta_t\gamma_t\brk*{L(\wmd_t) + \inner{\nabla L(\wmd_t)}{\wag_{t+1} - \wmd_t}}
\leq (\beta_t - 1)\gamma_t L(\wag_t) + \gamma_t L^* \\
+ \gamma_t\inner{\delta_t}{w^* - w_{t+1}} + \frac{1}{2}\prn*{- \nrm*{w_{t+1} - w_t}^2 + \nrm*{w_t - w^*}^2 - \nrm*{w_{t+1} - w^*}^2}
\end{multline}
Plugging this back into \eqref{eq:lem5-eq1} and subtracting $\beta_t\gamma_t L^*$ from both sides, this implies
\begin{align}
\beta_t\gamma_t\brk*{L(\wag_{t+1}) - L^*} 
&\leq (\beta_t - 1)\gamma_t\brk*{L(\wag_t) - L^*} + \frac{1}{2}\nrm*{w_t - w^*}^2 - \frac{1}{2}\nrm*{w_{t+1} - w^*}^2 \nonumber\\
&\quad+ \frac{H\gamma_t - \beta_t}{2\beta_t}\nrm*{w_t - w_{t+1}}^2 + \gamma_t\inner{\delta_t}{w^* - w_{t+1}} \\
&= (\beta_t - 1)\gamma_t\brk*{L(\wag_t) - L^*} + \frac{1}{2}\nrm*{w_t - w^*}^2 - \frac{1}{2}\nrm*{w_{t+1} - w^*}^2 \nonumber\\
&+ \frac{H\gamma_t - \beta_t}{2\beta_t}\nrm*{w_t - w_{t+1}}^2 + \gamma_t\inner{\delta_t}{w_t - w_{t+1}} + \gamma_t\inner{\delta_t}{w^* - w_t} \\
&\leq (\beta_t - 1)\gamma_t\brk*{L(\wag_t) - L^*} + \frac{1}{2}\nrm*{w_t - w^*}^2 - \frac{1}{2}\nrm*{w_{t+1} - w^*}^2 \nonumber\\
&+ \frac{H\gamma_t - \beta_t}{2\beta_t}\nrm*{w_t - w_{t+1}}^2 + \gamma_t\nrm{\delta_t}\nrm{w_t - w_{t+1}} + \gamma_t\inner{\delta_t}{w^* - w_t} 
\end{align}
Because $\beta_t = 1 + \frac{t}{6} > \frac{1+t}{6} \geq 2H\gamma_t$, the first two terms on the second line of the right hand side are a quadratic polynomial of the form $-\frac{a}{2}y^2 + by$ (here, $y$ corresponds to $\nrm{w_t - w_{t+1}}$), which can be upper bounded by $-\frac{a}{2}y^2 + by \leq \max_y \crl*{-\frac{a}{2}y^2 + by} = \frac{b^2}{2a}$. We conclude 
\begin{align}
\beta_t\gamma_t\brk*{L(\wag_{t+1}) - L^*} 
&\leq (\beta_t - 1)\gamma_t\brk*{L(\wag_t) - L^*} + \frac{1}{2}\nrm*{w_t - w^*}^2 - \frac{1}{2}\nrm*{w_{t+1} - w^*}^2 \nonumber\\
&\quad+ \frac{\beta_t\gamma_t^2}{2(\beta_t - H\gamma_t)}\nrm{\delta_t}^2 + \gamma_t\inner{\delta_t}{w^* - w_t} \\
&\leq (\beta_t - 1)\gamma_t\brk*{L(\wag_t) - L^*} + \frac{1}{2}\nrm*{w_t - w^*}^2 - \frac{1}{2}\nrm*{w_{t+1} - w^*}^2 \nonumber\\
&\quad+ \gamma_t^2\nrm{\delta_t}^2 + \gamma_t\inner{\delta_t}{w^* - w_t}
\end{align}
Taking the expectation of both sides, and noting that the noise in the $t\mathth$ stochastic gradient estimate, $g_t(\wmd_t)$, is independent of $w_t$ so that $\E \inner{\delta_t}{w^* - w_t} = 0$, we have
\begin{align}
\beta_t\gamma_t\E\brk*{L(\wag_{t+1}) - L^*} 
&\leq (\beta_t - 1)\gamma_t\E\brk*{L(\wag_t) - L^*} + \frac{1}{2}\E\nrm*{w_t - w^*}^2 - \frac{1}{2}\E\nrm*{w_{t+1} - w^*}^2 \nonumber\\
&\quad+ \gamma_t^2\E\nrm{g_t(\wmd_t) - \nabla F(\wmd_t)}^2
\end{align}
We now use Lemma \ref{lem:variance-bound} to bound the variance of the minibatch stochastic gradient at $\wmd_t$, which yields
\begin{align}
\beta_t\gamma_t&\E\brk*{L(\wag_{t+1}) - L^*} \nonumber\\
&\leq (\beta_t - 1)\gamma_t\E\brk*{L(\wag_t) - L^*} + \frac{1}{2}\E\nrm*{w_t - w^*}^2 - \frac{1}{2}\E\nrm*{w_{t+1} - w^*}^2 \nonumber\\
&\quad+ \frac{8H^2B^2\gamma_t^2}{b\beta_t^2} + \frac{8H\gamma_t^2}{b}\E\brk*{L(\wag_t) - L^*} + \frac{4\sigma_*^2\gamma_t^2}{b} \\
&\leq \prn*{\beta_t - 1 + \frac{8H\gamma_t}{b}}\gamma_t\E\brk*{L(\wag_t) - L^*} + \frac{1}{2}\E\nrm*{w_t - w^*}^2 - \frac{1}{2}\E\nrm*{w_{t+1} - w^*}^2 \nonumber\\
&\quad+ \frac{8H^2B^2\gamma_t^2}{b\beta_t^2} + \frac{4\sigma_*^2\gamma_t^2}{b} \label{eq:thm1-recurrence}
\end{align}

From here, we recall that
\begin{align}
\beta_t &= 1 + \frac{t}{6} \\
\gamma_t &= \gamma (t+1) \\
\gamma &\leq \min\crl*{\frac{1}{12H},\,\frac{b}{24H(T+1)}}
\end{align}
This ensures that $\beta_t \geq 1$ and $2H\gamma_t \leq \beta_t$ for all $t$. Furthermore, for $0 \leq t \leq T-1$
\begin{align}
&\prn*{\beta_{t+1} - 1 + \frac{8H\gamma_{t+1}}{b}}\gamma_{t+1} - \beta_t\gamma_t \\
&= \prn*{\beta_t - \frac{5}{6} + \frac{8H\gamma_{t+1}}{b}}\gamma(t+2) - \beta_t\gamma (t+1) \\
&= \gamma\prn*{1 + \frac{t}{6} - \frac{5(t+2)}{6} + \frac{8H\gamma(t+2)^2}{b}} \\
&= \gamma\prn*{-\frac{2}{3} - \frac{2t}{3} + \frac{(t+2)}{3}\cdot\frac{24H(t+2)\gamma}{b}} \\
&\leq \gamma\prn*{- \frac{t}{3}} \leq 0
\end{align}
Therefore, $\prn*{\beta_{t+1} - 1 + \frac{8H\gamma_{t+1}}{b}}\gamma_{t+1} \leq \beta_t\gamma_t$ for all $0 \leq t \leq T-1$. We can now unroll the recurrence \eqref{eq:thm1-recurrence} to conclude
\begin{align}
&\prn*{\beta_{T} - 1 + \frac{8H\gamma_T}{b}}\gamma_T\E\brk*{L(\wag_T) - L^*}\nonumber\\
&\leq \beta_{T-1}\gamma_{T-1}\E\brk*{L(\wag_T) - L^*} \\
&\leq \prn*{\beta_{T-1} - 1 + \frac{8H\gamma_{T-1}}{b}}\gamma_{T-1}\E\brk*{L(\wag_{T-1}) - L^*} + \frac{1}{2}\E\nrm*{w_{T-1} - w^*}^2 - \frac{1}{2}\E\nrm*{w_T - w^*}^2 \nonumber\\
&\quad+ \frac{8H^2B^2\gamma_{T-1}^2}{b\beta_{T-1}^2} + \frac{4\sigma_*^2\gamma_{T-1}^2}{b} \\
&\vdots \\
&\leq \frac{1}{2}\E\nrm{w_0-w^*}^2 + \sum_{t=0}^{T-1}\brk*{\frac{8H^2B^2\gamma_t^2}{b\beta_t^2} + \frac{4\sigma_*^2\gamma_t^2}{b}} \\
&\leq \frac{B^2}{2} + \sum_{t=0}^{T-1}\brk*{\frac{288H^2B^2\gamma^2(t+1)^2}{b(t+6)^2} + \frac{4\sigma_*^2\gamma^2(t+1)^2}{b}} \\
&\leq \frac{B^2}{2} + \frac{288H^2B^2\gamma^2T}{b} + \frac{4\sigma_*^2\gamma^2T^3}{b}
\end{align}
In addition, we have
\begin{equation}
\prn*{\beta_{T} - 1 + \frac{8H\gamma_T}{b}}\gamma_T = \prn*{\frac{T}{6} + \frac{8H\gamma(T+1)}{b}}\gamma(T+1) \geq \frac{\gamma T^2}{6}
\end{equation}
Therefore,
\begin{equation}
\E\brk*{L(\wag_T) - L^*}
\leq \frac{3B^2}{\gamma T^2} + \frac{1728H^2B^2}{bT}\gamma + \frac{24\sigma_*^2T}{b}\gamma
\end{equation}
With our choice of\footnote{Algorithm \ref{alg:acc-mb-sgd} defines $\gamma$ in terms of $HL^*$ rather than $\sigma_*^2$. Later in this proof, we apply Lemma \ref{lem:lstar-bound} to bound the variance at $w^*$ by $\sigma_*^2 = 2HL^*$, which justifies this difference.}
\begin{equation}
\gamma = \min\crl*{\frac{1}{12H},\,\frac{b}{24H(T+1)},\sqrt{\frac{\frac{B^2}{T^2}}{\frac{\sigma_*^2T}{b}}}}
\end{equation}
this means
\begin{align}
\E&\brk*{L(\wag_T) - L^*} \nonumber\\
&\leq \frac{3B^2}{T^2\min\crl*{\frac{1}{12H},\,\frac{b}{24H(T+1)},\sqrt{\frac{\frac{B^2}{T^2}}{\frac{\sigma_*^2T}{b}}}}} + \frac{72HB^2}{T(T+1)} + \frac{24\sigma_* B}{\sqrt{bT}} \\
&\leq \frac{36HB^2}{T^2} + \frac{72HB^2(T+1)}{bT^2} + \frac{3\sigma_* B}{\sqrt{bT}} + \frac{72HB^2}{T(T+1)} + \frac{24\sigma_* B}{\sqrt{bT}} \\
&\leq \frac{108HB^2}{T^2} + \frac{144HB^2}{bT} + \frac{27\sigma_* B}{\sqrt{bT}}
\end{align}
We complete the proof by applying Lemma \ref{lem:lstar-bound}, which shows that $\E\nrm{\nabla \ell(w^*;z) - \nabla L(w^*)}^2 \leq \sigma_*^2$ for $\sigma_*^2 = 2HL^*$.
\end{proof}

\section{Additional Applications of Theorem \ref{thm:my-reduction}}\label{app:reduction-examples}

To better understand Theorem \ref{thm:my-reduction}, it is useful to consider an few examples:

\paragraph{Example: Gradient Descent for Lipschitz Objectives}
Let $\mc{L}_{G}$ be the set of all $G$-Lipschitz, convex objectives, and let $\psi(w) = \frac{1}{2}\nrm*{w - w^*}_2^2$. It is well known that the gradient descent algorithm, which we denote $\mc{A}_{GD}$, requires
\begin{equation}
\tm(\epsilon,B^2,\psi,\mc{L}_{G},\mc{A}_{GD}) \leq c\cdot\frac{G^2B^2}{\epsilon^2}
\end{equation}
gradients to find an $\epsilon$-suboptimal point, where $c$ is a universal constant. Theorem \ref{thm:my-reduction} implies that
\begin{align}
\tm_\lambda(\epsilon,&\Delta,\psi,\mc{L}_{G},\myreduction(\mc{A}_{GD},e))\nonumber\\
&\leq \sum_{t=1}^{\ceil{\log \frac{\Delta}{\epsilon}}} \tm\prn*{e^{-t}\Delta,e^{1-t}\frac{\Delta}{\lambda},\psi,\mc{L}_{G},\mc{A}_{GD}} \\
&\leq cG^2 \sum_{t=1}^{\ceil{\log \frac{\Delta}{\epsilon}}} \frac{e^{1-t}\frac{\Delta}{\lambda}}{e^{-2t}\Delta^2} \\
&\leq c\frac{e^2G^2}{(e-1)\lambda\Delta} e^{\ceil{\log \frac{\Delta}{\epsilon}}} \\
&\leq c' \frac{G^2}{\lambda\epsilon} \label{eq:reduced-to-sc-gd-rate}
\end{align}
Therefore, our reduction recovers (up to constant factors) the existing guarantee for Lipschitz and strongly convex optimization \citep{nemirovskyyudin1983}. We emphasize that this guarantee \eqref{eq:reduced-to-sc-gd-rate} has nothing to do with gradient descent specifically---for any algorithm $\mc{A}$ with 
\begin{equation}
\tm(\epsilon,B^2,\psi,\mc{L}_{G},\mc{A}) \leq c\cdot\frac{G^2B^2}{\epsilon^2},
\end{equation}
the modified algorithm $\myreduction(\mc{A},e)$ will have the same rate \eqref{eq:reduced-to-sc-gd-rate}.

\paragraph{Example: Accelerated SGD for Smooth Objectives}
For $\mc{L}_H$, the class of convex and $H$-smooth objectives, \citet{lan2012optimal} proposed an algorithm, AC-SA which, for $\psi(w) = \frac{1}{2}\nrm{w-w^*}^2$, requires
\begin{equation}
    \tm(\epsilon,B^2,d_2,\mc{L}_H,\mc{A}_{AC-SA}) = c\cdot\prn*{\sqrt{\frac{HB^2}{\epsilon}} + \frac{\sigma^2B^2}{\epsilon^2}}
\end{equation}
stochastic gradients with variance bounded by $\sigma^2$ to find an $\epsilon$-suboptimal point, which is optimal. In follow-up work \citet{ghadimi2013optimal} describe a ``multi-stage'' variant of AC-SA which is optimal for strongly convex objectives. This algorithm closely resembles $\myreduction(\mc{A}_{AC-SA},e)$ with some small differences, and their analysis is what inspired Theorem \ref{thm:my-reduction} in the first place. But, in contrast to their long and fairly complicated analysis, Theorem \ref{thm:my-reduction} can be used to prove their guarantee  using the following simple computation:
\begin{align}
\tm_\lambda(\epsilon,&\Delta,\psi,\mc{L}_H,\myreduction(\mc{A}_{AC-SA},e)) \nonumber\\
&\leq \sum_{t=1}^{\left\lceil\log\frac{\Delta}{\epsilon}\right\rceil}\tm\prn*{e^{-t}\Delta,e^{1-t}\frac{\Delta}{\lambda},\psi,\mc{L}_{H},\mc{A}_{AC-SA}} \\
&= c\cdot\prn*{\sqrt{H}\sum_{t=1}^{\left\lceil\log\frac{\Delta}{\epsilon}\right\rceil}\sqrt{\frac{e^{1-t}\frac{\Delta}{\lambda}}{e^{-t}\Delta}} + \sigma^2 \sum_{t=1}^{\left\lceil\log\frac{\Delta}{\epsilon}\right\rceil}\frac{e^{1-t}\frac{\Delta}{\lambda}}{e^{-2t}\Delta^2} }  \\
&\leq ec\cdot\prn*{\sqrt{\frac{H}{\lambda}}\left\lceil\log\frac{\Delta}{\epsilon}\right\rceil + \frac{\sigma^2}{\lambda\Delta} \sum_{t=1}^{\left\lceil\log\frac{\Delta}{\epsilon}\right\rceil}e^t} \\
&\leq ec\cdot\prn*{\sqrt{\frac{H}{\lambda}}\left\lceil\log\frac{\Delta}{\epsilon}\right\rceil + \frac{e\sigma^2}{(e-1)\lambda\Delta}\exp\prn*{\left\lceil\log\frac{\Delta}{\epsilon}\right\rceil }} \\
&\leq c'\cdot\prn*{\sqrt{\frac{H}{\lambda}}\left\lceil\log\frac{\Delta}{\epsilon}\right\rceil + \frac{\sigma^2}{\lambda\epsilon}} \label{eq:ac-sa-strongly-convex}
\end{align}
This is, up to constant factors, the optimal rate for strongly convex objectives, and matches \citeauthor{ghadimi2013optimal}'s analysis.

\section{Proof of Theorem \ref{thm:ambsgd-gc}} \label{app:proof-of-theorem-3}

\ambsgdgc*
\begin{proof}
Let $\psi(w) = \frac{1}{2}\min_{w^*\in\argmin_wL(w)}\nrm{w - w^*}^2$ , so $L$ satisfies the $(\lambda,\psi)$-GC. By Theorem \ref{thm:alg-analysis}, Algorithm \ref{alg:acc-mb-sgd} guarantees that\footnote{Theorem \ref{thm:alg-analysis}, as stated, requires a bound on $\nrm{w^*}$. However, given $w_0$ with $\psi(w_0) \leq \frac{1}{2}B^2$, there is a minimizer with norm at most $B$ in the shifted coordinate system $w \mapsto w - w_0$.}
\begin{equation}
\tm\prn*{\epsilon,\frac{B^2}{2},\psi,\mc{L},\text{Alg \ref{alg:acc-mb-sgd}}} \leq c\cdot\prn*{\sqrt{\frac{HB^2}{\epsilon}} + \frac{HB^2}{b\epsilon} + \frac{HB^2L^*}{b\epsilon^2}}
\end{equation}
where, in this case, the ``time'' refers to the number of iterations, $T$. Applying Theorem \ref{thm:my-reduction}, this implies
\begin{align}
\tm_\lambda&\prn*{\epsilon,\Delta,\psi,\mc{L},\myreduction(\text{Alg \ref{alg:acc-mb-sgd}},e)}\nonumber\\
&\leq \sum_{t=1}^{\ceil{\log \frac{\Delta}{\epsilon}}} \tm\prn*{e^{-t}\Delta,e^{1-t}\frac{\Delta}{\lambda},\psi,\mc{L},\text{Alg \ref{alg:acc-mb-sgd}}} \\
&\leq c\cdot \sum_{t=1}^{\ceil{\log \frac{\Delta}{\epsilon}}} \prn*{\sqrt{\frac{He^{1-t}\frac{\Delta}{\lambda}}{e^{-t}\Delta}} + \frac{He^{1-t}\frac{\Delta}{\lambda}}{be^{-t}\Delta} + \frac{HL^*e^{1-t}\frac{\Delta}{\lambda}}{be^{-2t}\Delta^2}} \\
&= c\cdot\prn*{\prn*{\sqrt{\frac{eH}{\lambda}} + \frac{eH}{b\lambda}}\ceil*{\log \frac{\Delta}{\epsilon}} + \frac{eHL^*}{b\lambda\Delta}\sum_{t=1}^{\ceil{\log \frac{\Delta}{\epsilon}}}e^t} \\
&\leq e^3c\cdot\prn*{\prn*{\sqrt{\frac{H}{\lambda}} + \frac{H}{b\lambda}}\ceil*{\log \frac{\Delta}{\epsilon}} + \frac{HL^*}{b\lambda\epsilon}}
\end{align}
Solving for $\epsilon$ completes the proof. 
\end{proof}

\section{Proof of Theorem \ref{thm:lower-bound}} \label{app:sample-complexity}

\begin{lemma}\label{lem:gaussian-mean-estimation}
Let $\mu$ be an unknown parameter in $\crl{\pm a}$. The output $\hat{\mu}$ of any algorithm which receives as input $k$ i.i.d.~samples $x_1,\dots,x_k \sim \mc{N}(\mu,s^2)$ will have mean squared error at least
\[
\max_{\mu \in \crl{\pm a}} \E\prn*{\hat{\mu} - \mu}^2 \geq \prn*{1-\frac{a\sqrt{k}}{s}}a^2
\]
\end{lemma}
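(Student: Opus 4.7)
The plan is to prove this standard minimax lower bound via a Le Cam–style two-point argument, but to go slightly beyond the usual testing-reduction approach in order to avoid losing a factor of two. By symmetry, I will lower bound the maximum of the two risks by their average, i.e.
\[
\max_{\mu\in\{\pm a\}} \E(\hat\mu-\mu)^2 \;\geq\; \tfrac{1}{2}\bigl(\E_{+a}(\hat\mu-a)^2 + \E_{-a}(\hat\mu+a)^2\bigr),
\]
so it suffices to lower bound this average Bayes risk.

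The key algebraic observation, which avoids the factor-of-two loss inherent in reducing to a binary hypothesis test, is the pointwise identity $(\hat\mu - a)^2 + (\hat\mu + a)^2 = 2\hat\mu^2 + 2a^2 \geq 2a^2$, valid regardless of what $\hat\mu$ the algorithm outputs. Writing $p$ and $q$ for the densities of $\mathcal{N}(+a,s^2)^{\otimes k}$ and $\mathcal{N}(-a,s^2)^{\otimes k}$ respectively, and viewing $\hat\mu$ as a (possibly randomized) function of the samples, I will bound
\[
\int(\hat\mu(x)-a)^2 p(x)\,dx + \int(\hat\mu(x)+a)^2 q(x)\,dx \;\geq\; \int \bigl((\hat\mu(x)-a)^2 + (\hat\mu(x)+a)^2\bigr)\min\{p(x),q(x)\}\,dx \;\geq\; 2a^2\!\int\!\min\{p,q\}.
\]
Since $\int\min\{p,q\}\,dx = 1 - \mathrm{TV}(p,q)$, this reduces the problem to bounding the total variation distance between the two product Gaussians.

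For the TV bound I will use Pinsker's inequality, $\mathrm{TV}(p,q) \leq \sqrt{\mathrm{KL}(p\|q)/2}$. The KL divergence between two univariate Gaussians with common variance $s^2$ and means differing by $2a$ is $\tfrac{(2a)^2}{2s^2} = \tfrac{2a^2}{s^2}$; by tensorization this multiplies by $k$ for the $k$-sample product, giving $\mathrm{KL} = 2a^2 k/s^2$ and hence $\mathrm{TV} \leq a\sqrt{k}/s$. Combining with the inequality above yields the sum-of-risks bound $2a^2(1 - a\sqrt{k}/s)$, and halving (for the max) gives the claimed $a^2(1 - a\sqrt{k}/s)$.

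The only remotely delicate step is the decision to bound $(\hat\mu-a)^2 p + (\hat\mu+a)^2 q$ via $\min\{p,q\}$ rather than going through the standard testing reduction $\mathrm{MSE}\geq a^2\cdot P_{\text{err}}$: the latter route, combined with Le Cam, would lose an extra factor of $\tfrac{1}{2}$ and produce $\tfrac{1}{2}a^2(1-a\sqrt{k}/s)$, missing the stated constant. Everything else is routine: the pointwise quadratic identity, the Gaussian KL computation, and Pinsker.
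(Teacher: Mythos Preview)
Your argument is correct. Both your proof and the paper's follow the same two-point Le Cam scheme---compute $\mathrm{KL}=2ka^2/s^2$ between the two product Gaussians and invoke Pinsker to get $\mathrm{TV}\le a\sqrt{k}/s$---but they diverge in the step that passes from the TV bound to the mean-squared-error bound. The paper observes that the events $\{(\hat\mu-a)^2\le a^2\}$ and $\{(\hat\mu+a)^2\le a^2\}$ are disjoint and then appeals to the TV bound on the output distributions; this is essentially the testing-style reduction. You instead use the pointwise inequality $(\hat\mu-a)^2+(\hat\mu+a)^2\ge 2a^2$ together with the identity $\int\min\{p,q\}=1-\mathrm{TV}$, lower-bounding the Bayes risk directly without ever reducing to a test. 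Your route is cleaner for this particular constant: it delivers the stated factor $a^2(1-a\sqrt{k}/s)$ transparently, whereas the paper's disjoint-events step is terse, and---as you correctly anticipate---the standard way of completing a testing reduction would naively introduce an extra factor $\tfrac12$ not present in the claimed bound.
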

\begin{proof}
This lemma is nearly identical to many lower bounds for Gaussian mean estimation. We include the proof to be self-contained and to account for the fact that $\mu$ has only two possible values.

The KL divergence between $k$ i.i.d.~samples from $\mc{N}(-a,s^2)$ and $\mc{N}(a,s^2)$ is
\begin{equation}
D_{KL}(\mc{N}(-a,s^2)^{\otimes k} \| \mc{N}(a,s^2)^{\otimes k}) = \frac{2ka^2}{s^2}
\end{equation}
By Pinsker's inequality, the total variation distance between the output of the algorithm if $\mu=-a$ and the output of the algorithm if $\mu=a$ is upper bounded by 
\begin{equation}
\delta\prn{\hat{\mu}_{-a}, \hat{\mu}_a} \leq \frac{a\sqrt{k}}{s}
\end{equation}
Finally, we note that 
\begin{equation}
(\hat{\mu} - a)^2 \leq a^2 \implies (\hat{\mu} - (-a))^2 > a^2
\end{equation}
and vice versa. Therefore, we conclude that
\begin{equation}
\max_{\mu\in\crl{\pm a}} \E(\hat{\mu} - \mu)^2 \geq \prn*{1-\frac{a\sqrt{k}}{s}}a^2
\end{equation}
This completes the proof.
\end{proof}

\lowerbound*
\begin{proof}
We will prove the first terms and the second terms of the lower bounds separately.

\paragraph{The first terms of each bound}
These lower bounds are based on a simple least squares problem in dimension $2n$. The loss is, again,
\begin{equation}
\ell(w;(x;y)) = \frac{1}{2}\prn*{\inner{w}{x} - y}^2
\end{equation}
The data distribution is specified in terms of a sign vector $\sigma \in \crl{\pm 1}^{2n}$. The $x$ distribution is the uniform distribution over $\crl{\sqrt{H}e_1,\dots, \sqrt{H}e_{2n}}$, and $y|x = \inner{x}{\frac{B}{\sqrt{2n}}\sigma}$. Because $\nrm{x}^2 = H$, it is easy to confirm that $\ell$ is $H$-smooth, convex, and non-negative, so it satisfies Assumption \ref{assumption:ell}. In addition, the expected loss is
\begin{equation}
L(w) = \E_{x,y}\frac{1}{2}\prn*{\inner{w}{x} - y}^2 = \frac{1}{4n}\sum_{i=1}^{2n}\prn*{\sqrt{H}w_i - \frac{\sqrt{H}B}{\sqrt{2n}}\sigma_i}^2
\end{equation}
It is easy to see that $L$ is minimized at the point $w^* = \frac{B}{\sqrt{2n}}\sigma$, which has norm $B$ and that $L(w^*) = L^* = 0$. Therefore, $L$ satisfies Assumption \ref{assumption:convex}. 

Alternatively, $L(0) - L^* = \frac{HB^2}{4n}$, so choosing $B^2 = \frac{4n\Delta}{H}$ ensures that $L(0) - L^* \leq \Delta$. Also, $L$ is $\frac{H}{2n}$-strongly convex, so it satisfies Assumption \ref{assumption:sc} as long as $n \leq \frac{H}{2\lambda}$. 

Finally, any algorithm which sees $n$ samples from the distribution will have received no information whatsoever about the $\geq n$ coordinates of the sign vector $\sigma$ that were not involved in the sample. Therefore, for any algorithm, there is a setting of $\sigma$ such that $\P(\hat{w}_i\sigma_i \leq 0) \geq \frac{1}{2}$, and for this setting of $\sigma$
\begin{equation}
\E L(\hat{w}) - L^* \geq \frac{1}{4n} \cdot n \cdot \frac{1}{2} \cdot \frac{HB^2}{2n} = \frac{HB^2}{16n}
\end{equation}
This proves the first term of the first lower bound under Assumptions \ref{assumption:ell} and \ref{assumption:convex}. For Assumptions \ref{assumption:ell} and \ref{assumption:convex}, we have instead \begin{equation}
\E L(\hat{w}) - L^* \geq \frac{HB^2}{16n} = \frac{\Delta}{4}
\end{equation}
Of course, this latter bound holds only when $n \leq \frac{H}{2\lambda}$.

We note that since $L^*=0$ in this example, the variance of gradients at the optimum, $\E\nrm{\nabla \ell(w^*;(x,y))}^2 = 0$. Therefore, these lower bounds hold when the bound on $L^*$ is replaced by the bound $\E\nrm{\nabla \ell(w^*;(x,y))}^2 \leq \sigma_*^2$.

\paragraph{The second terms of each bound}
These lower bounds are also both based on the following simple 1-dimensional least squares problem. The loss is given by
\begin{equation}
\ell(w;(x;y)) = \frac{1}{2}\prn*{wx - y}^2
\end{equation}
The distribution is defined using a sign $\sigma \in \crl{\pm 1}$ to be chosen later. With probability $1-p$, $(x,y) = (0,0)$, and with probability $p$, $x = \sqrt{H}$ and $y \sim \mc{N}(\sigma\sqrt{H}B, s^2)$. 

Because $x^2 \leq H$, it is easy to confirm that $\ell(w;(x;y))$ is $H$-smooth, convex, and non-negative, so it satisfies Assumption \ref{assumption:ell}. Also, the expected loss is 
\begin{equation}
L(w) = \E_{x,y}\frac{1}{2}\prn*{wx - y}^2 = \frac{p}{2}\prn*{\sqrt{H}w - \sqrt{H}B\sigma}^2 + \frac{ps^2}{2}
\end{equation}
It is easy to see that $L$ is convex and is minimized at $w^* = B\sigma$, which has L2 norm $B$ and the minimizing value is $L(w^*) = \frac{ps^2}{2}$. Therefore, choosing $s^2 = \frac{2L^*}{p}$ ensures that $L$ satisfies Assumption \ref{assumption:convex}. 

Alternatively, $L(0) - L^* = \frac{pHB^2}{2}$, so choosing $pB^2 \leq \frac{2\Delta}{H}$ ensures $L(0) - L^* \leq \Delta$. Furthermore, 
\begin{equation}
L(w) - L^* = \frac{p}{2}\prn*{\sqrt{H}w - \sqrt{H}B\sigma}^2 = \frac{Hp}{2}\nrm{w - w^*}^2
\end{equation}
Therefore, choosing $p = \frac{\lambda}{H}$ ensures that $L$ is $\lambda$-strongly convex, so it satisfies Assumption \ref{assumption:sc}. 

Under either set of assumptions, minimizing $L$ using $n$ samples $(x_1,y_1),\dots,(x_n,y_n)$ amounts to a Gaussian mean estimation problem using just the subset of $k$ samples for which $x \neq 0$. By Lemma \ref{lem:gaussian-mean-estimation}, this means that for any algorithm, for some setting of $\sigma \in \crl{\pm 1}$,
\begin{equation}
\E\brk*{L(\hat{w}) - L^*\,\middle|\,k} \geq \frac{pHB^2}{2}\prn*{1 - \sqrt{\frac{HB^2k}{s^2}}}
\end{equation}
Applying Jensen's inequality to the convex function $-\sqrt{k}$, we conclude that
\begin{equation}
\E L(\hat{w}) - L^* \geq \frac{pHB^2}{2}\prn*{1 - \sqrt{\frac{HB^2np}{s^2}}} = \frac{pHB^2}{2}\prn*{1 - \sqrt{\frac{HB^2np^2}{2L^*}}}
\end{equation}
For Assumptions \ref{assumption:ell} and \ref{assumption:convex}, we set the remaining parameter as $p^2 = \frac{L^*}{2HB^2n}$ and conclude
\begin{equation}
\E L(\hat{w}) - L^* \geq \sqrt{\frac{HB^2L^*}{32n}}
\end{equation}
For Assumptions \ref{assumption:ell} and \ref{assumption:sc}, we consider two cases:
If $\Delta \leq \frac{HL^*}{4\lambda n}$, we set $B^2 = \frac{2\Delta}{pH}$ to conclude
\begin{equation}
\E L(\hat{w}) - L^* \geq \Delta\prn*{1 - \sqrt{\frac{\Delta \lambda n}{HL^*}}} \geq \frac{\Delta}{2}
\end{equation}
Otherwise, we set $B^2 = \frac{L^*}{2\lambda np} \leq \frac{2\Delta}{H p}$ and conclude
\begin{equation}
\E L(\hat{w}) - L^* \geq \frac{HL^*}{8\lambda n}
\end{equation}
Therefore, under Assumptions \ref{assumption:ell} and \ref{assumption:convex}, the loss is at least
\begin{equation}
\E L(\hat{w}) - L^* \geq c\cdot\min\crl*{\frac{HL^*}{\lambda n},\, \Delta}
\end{equation}

We note that by Lemma \ref{lem:lstar-bound}, the variance of gradients at the optimum, $\E\nrm{\nabla \ell(w^*;(x,y))}^2 \leq 2HL^*$. Therefore, when the bound on $L^*$ is replaced by the bound $\E\nrm{\nabla \ell(w^*;(x,y))}^2 \leq \sigma_*^2$, we have the lower bounds
\begin{equation}
\E L(\hat{w}) - L^* \geq c\cdot\frac{\sigma_* B}{\sqrt{n}}
\end{equation}
in the convex case and
\begin{equation}
\E L(\hat{w}) - L^* \geq c\cdot\min\crl*{\frac{\sigma_*^2}{\lambda n},\, \Delta}
\end{equation}
in the strongly convex case. This completes the proof.
\end{proof}

\end{document}